    \newcommand\newdot{{\kern.8pt\cdot\kern.8pt}}
\newcommand\nbull{{\kern.8pt\raise1.5pt\hbox{\small\bf .}\kern.8pt}}
\newcommand\1{\hbox{\kern.375em\vrule height1.57ex depth-.1ex
		width.05em\kern-.375em \rm 1}}
\newtheorem{theorem}{Theorem}
\newtheorem{lemma}{Lemma}
\newtheorem{corollary}{Corollary}
\newtheorem{proposition}{Proposition}
\newtheorem{assumption}{Assumption}
\newtheorem{remark}{Remark}
\newcommand\E{\mathbb{E}}
\newcommand\R{\mathbb{R}}
\renewcommand\P{\mathbb{P}}
\newcommand{\lb}{\left(}
\newcommand{\rb}{\right)}
\DeclareMathOperator*{\argmin}{arg\,min}
\DeclareMathOperator{\predmat}{Z}
\DeclareMathOperator{\Fr}{Fr}
 \DeclareMathOperator{\sign}{sign}
 \DeclareMathOperator{\diag}{diag}
\newcommand{\pstar}{{p^{*}}}
\newcommand{\deltastar}{\Delta_{*}}
\newcommand{\pstarexpand}{\max(\max_{i}p_i,\max_j q_j)}
\newcommand{\midmat}{\underline{M}}
\newcommand{\midmatstar}{{\underline{M}^{*}}}
\newcommand{\xboundstar}{\textbf{x}^*}
\newcommand{\yboundstar}{\textbf{y}^*}
\newcommand{\pstarr}{{\mathcal{P}^{*}}}
\newcommand{\bd}{\mathcal{B}}
\newcommand{\observe}{O_M}
\newcommand{\excess}{E_M}
\newcommand{\mathm}{\mathcal{M}}
\newcommand{\midmathatprime}{ {\midmathat}'}
\newcommand{\midmathat}{\widehat{\midmat}}
\newcommand{\midmatprime}{ {\midmat}'}
\newcommand{\midmatprimepand}{R_l\midmat R_r^\top}
\newcommand{\stx}{{X^*}}
\newcommand{\sty}{{Y^*}^\top }
\newcommand{\quadfive}{\quad \quad \quad \quad \quad }
\newcommand{\pstarrexpand}{\max \left( \sqrt{\frac{n}{m}} \xboundstar,\sqrt{\frac{m}{n}} \yboundstar\right)}
\newcommand{\obvs}{\widetilde{G}_{o}}   
\newcommand{\obvsnoo}{\widetilde{G}}   
\DeclareMathOperator{\ground}{G}    
\DeclareMathOperator{\pred}{Z_{\xi^o}}   
\newcommand{\erank}{r}
\newcommand{\andd}{\quad \quad \text{and} \quad \quad }
\newcommand{\rademex}{\widetilde{\mathcal{R}}_N}
\newcommand{\rademexpand}{\frac{1}{N}\sum_{o=1}^N1_{\xi^e_0}S_{\xi^e_o}}
\newcommand{\prerademex}{{\mathcal{R}_N}}
\newcommand{\prerademexpand}{\frac{1}{N}\sum_{o=1}^N1_{\xi^e_0}}
\newcommand{\tosvdize}{\mathcal{H}}
\newcommand{\tosvdizepand}{\frac{1}{M} \sum_{o=1}^M 1_{\xi_{o}}}
\newcommand{\lfnhat}{\widehat{\lfn}}
\newcommand{\nunif}{\Gamma}
\DeclareMathOperator{\lfn}{l}   
\DeclareMathOperator{\lip}{\ell}
\DeclareMathOperator{\entry}{\mathcal{B}}  
\algrenewcommand\algorithmicreturn{\textbf{Return}}
\newcommand{\noisenoo}{\zeta}
\newcommand{\distrimat}{\mathcal{G}}
\newcommand{\GAP}{\text{GAP}}
\newcommand{\disentangle}{\text{Disentangled Estimate}}
\newcommand{\red}[1]{\begingroup\color{red}#1\endgroup}
\title{Generalization Bounds for Semi-supervised Matrix Completion with Distributional Side Information}
\author{
    Antoine Ledent\textsuperscript{\rm 1},
    Mun Chong Soo\textsuperscript{\rm 2},
    Nong Minh Hieu\textsuperscript{\rm 1}
}
\begin{document}
\maketitle

\begin{abstract}
  We study a matrix completion problem where both the ground truth $R$  matrix and the unknown sampling distribution $P$ over observed entries are low-rank matrices, and \textit{share a common subspace}. We assume that a large amount  $M$ of \textit{unlabeled} data drawn from the sampling distribution $P$ is available, together with a small amount $N$ of labeled data drawn from the same distribution and noisy estimates of the corresponding ground truth entries. This setting is inspired by recommender systems scenarios where the unlabeled data corresponds to `implicit feedback' (consisting in interactions such as purchase, click, etc. ) and the labeled data corresponds to the `explicit feedback', consisting of interactions where the user has given an explicit rating to the item. Leveraging powerful results from the theory of low-rank subspace recovery, together with classic generalization bounds for matrix completion models, we show error bounds consisting of a sum of two error terms scaling as $\widetilde{O}\left(\sqrt{\frac{nd}{M}}\right)$ and $\widetilde{O}\left(\sqrt{\frac{dr}{N}}\right)$ respectively, where $d$ is the rank of $P$ and $r$ is the rank of $M$. In synthetic experiments, we confirm that the true generalization error naturally splits into independent error terms corresponding to the estimations of $P$ and and the ground truth matrix $\ground$ respectively. In real-life experiments on Douban and MovieLens with most explicit ratings removed, we demonstrate that the method can outperform baselines relying only on the explicit ratings, demonstrating that our assumptions provide a valid toy theoretical setting to study the interaction between explicit and implicit feedbacks in recommender systems.
\end{abstract}

\section{Introduction}

\textit{Matrix completion} (MC) refers to a broad class of statistical problems where one wishes to recover the entries of an unknown ground truth matrix $\ground\in\R^{m\times n}$ based on a set of $N\ll mn$ potentially noisy observations. In short, it is a supervised learning problem where the independent variable is a (row, column) pair where both components can only take a finite set of values $[m]$ or $[n]$.  Despite its apparent simplicity, this problem is not only of high practical relevance (in recommender systems~\citep{koren09,IGMC}, chemical and thermal engineering~\citep{selfcite,hansch2025data} and drug discovery~\citep{IMCforDrug}, etc.), but also surprisingly challenging and nuanced in its statistical properties. The earliest and most well-known works in the field focused on the \textit{exact recovery} problem: the celebrated works of~\citet{Genius,CandesRecht} showed that minimizing the nuclear norm of the candidate matrix $Z$ subject to $Z_{i,j}=G_{i,j}$ whenever the entry $(i,j)$ is in the set $\Omega$ of observed entries provably recovers the exact ground truth matrix as long as the number of observations taken uniformly at random is larger than $\widetilde{O}\left(nr\right)$ where $n$ is the size of the matrix and $r$ is the ground truth rank.  However, many different regimes can be considered. For instance, some recent work refine the bounds of~\citet{Genius,noisycandes} to incorporate a very fine joint dependence on the subgaussianity constant of the noise and the size and rank of the matrix~\cite{ChenChi20}, whilst other study excess risk bounds in nonuniform sampling regimes~\citep{foygel2011learning,ReallyUniform1,ReallyUniform2} or out-of-distribution generalization for certain missingess patterns~\citep{ma2019MNAR}.

A notable set of works consider the so-called `Inductive Matrix Completion' (IMC) setting~\cite{IMCtheory1,mostrelated,LedentIMC}, where one assumes that the learner has access to side information matrices $X\in\R^{m\times d}$ and $Y\in\R^{n\times d}$ with the property that the ground truth matrix $\ground$ can be represented as $\ground=XM Y^{\top}$  for some unknown matrix $M$. The role of the matrices $X,Y$ is to represent some external knowledge about each value of the underlying discrete variable. For instance, in recommender systems, the rows of $X$ and $Y$ would correspond to feature vectors describing the users and items respectively. Similarly, in a drug interaction prediction context, the rows of $X,Y$ would consist of feature vectors containing information about the chemical composition of each drug. Thus, the IMC framework aims to take matrix completion closer to real applications by incorporating available side information, and continues to attract interest in recent years, with many different variants being proposed. For instance, ~\citet{jalan2025optimal} study a challenging and biologically-inspired setting where entire rows or columns are missing and study both passive and active sampling regimes, where the user can select whole rows and columns to observe. However, a weakness of existing IMC approaches is the assumption that the side information matrices $X,Y$ are known \textit{a priori}, and the column (resp. row) space of the ground truth matrix is exactly contained in the column space of $X$ (resp. $Y$). In practice, such side information needs to be \textit{estimated} from other forms of training data (social media user graph, molecular drug structure, etc.).

To the best of our knowledge, all theoretical results on matrix completion to date assume that the available samples are all labeled: every observed entry $(i,j)\in[m]\times [n]$ is accompanied by a (possibly noisy) estimate of the ground truth entry $\ground_{i,j}$. However, in many real-life scenarios, it is common for a much larger amount of unlabeled samples to be naturally available. In a recommender system, one may observe a large set of implicit interactions $\Omega$, where the presence of the pair $(i,j)$ in $\Omega$ indicates that user $i$ has watched/consumed the item $j$. This type of interaction, often referred to as `implicit feedback' is distinct from `explicit' interactions where a \textit{rating} (typically on a scale between 1 and 5 stars) is given by user $i$ to item $j$: such explicit interactions might be much more scarce. Thus, there is a need for a more inclusive learning setting in matrix completion to incorporate semi-supervised settings. Our contributions are as follows: 
\begin{itemize}
    \item We propose a semi-supervised learning paradigm for matrix completion: we assume that the sampling distribution $P\in[0,1]^{m\times n}$ over entries shares a low-rank subspace with the ground truth matrix $\ground$, and that that a large number $M$ of \textit{unlabeled} interactions is provided, together with a much smaller number $N$ of \textit{labeled} interactions.
    \item Under incoherence, \textit{uniform marginals} and bounded sampling probability assumptions, we leverage powerful results from matrix pertrubation theory to prove generalization bounds which scale like $\widetilde{O}\left(\sqrt{\frac{nd}{M}}\right)+\widetilde{O}\left(\sqrt{\frac{dr}{N}}\right)$, showing that the estimation errors associated to the subspace recovery (from unlabeled samples) and matrix recovery (from labeled samples) can be \textit{disentangled}. 
    \item In synthetic data experiments, we demonstrate that the generalization error can indeed be closely approximated by a sum of terms corresponding to each form of error. 
    \item In RecSys datasets, we demonstrate that a large number of unlabeled samples can substantially improve the performance of explicit feedback prediction methods. This aligns with the conclusions implicitly in recent works~\cite{IGMC,xia2022hypergraph,ledent2025conv4rec}, lending legitimacy to our learning paradigm. 
\end{itemize} 

\section{Related Works}

\textbf{Matrix completion in the i.i.d. setting (without side information):} Our results are most closely related to the so-called `approximate recovery' branch of the literature on matrix completion, which assumes that the observations are drawn i.i.d. from a distribution over entries, and that the performance measure is the (in-distribution) excess risk. In this setting, a sample complexity of $\widetilde{O}\left(nr\right)$ was achieved for empirical risk minimization with a nuclear norm constraint in~\citet{foygel2011learning} under a uniform marginals assumption, whilst a sample complexity of $\widetilde{O}\left(n^{\frac{3}{2}} \sqrt{r}\right)$ was achieved under arbitrary sampling regimes in~\citet{ReallyUniform1,ReallyUniform2}. Before that, similar settings were studied in~\citet{early}, which includes the case of explicit rank restriction in a classification setting. When imposing Schatten $p$ quasi norm constraints, recent results show a sample complexity of $\widetilde{O}\left( n^{1+\frac{p}{2}} r^{1-\frac{p}{2}}\right)$ and $\widetilde{O}\left(nr\right)$ in the arbitrary sampling and uniform marginals settings respectively~\cite{ledent2024generalization}. Very tight bounds in terms of the dependency on the variance of the noise were proved in~\citet{spectralnormmagic} and~\citet{spectralmagic} for nuclear norm and exactly low-rank cases respectively. However, all of those results depend at least linearly in the size $n$ of the matrix and require all of the samples to be labeled, making the results ineffective in the semi-supervised learning setting we aim to study. 

Another closely related branch of the literature studies `\textbf{Inductive Matrix Completion}' (with side information), which also studies the in-distribution i.i.d. setting but assumes the presence of known side information matrices $X\in\R^{m\times d}$ and $Y\in\R^{n\times d}$ such that the ground truth matrix is known to be representable in the form $\ground=XM Y^\top$ for some $M\in\R^{d\times d}$. Similarly to our work, much of the literature on this problem imposes nuclear norm constraints on the core matrix $M$. The problem was initially proposed in~\citet{IMCtheory1}, which studied the exact recovery setting under the uniform sampling regime with nuclear norm minimization, and was later studied under the i.i.d. setting for the first time in~\citet{mostrelated,mostrelatedearly}, where sample complexity bounds of $\widetilde{O}(d^2r)$ are provided. The bounds were improved to $\widetilde{O}\left(d^{\frac{3}{2}}\sqrt{r}\right)$ and $\widetilde{O}\left(dr\right)$  in the arbitrary sampling and uniform marginals cases respecitvely in~\citet{LedentIMC}, and bounds with a finer dependence on the variance of the noise were proved in~\citet{ledent2023generalization}. Whilst those rates are similar to our bounds on the estimation error arising from the labeled data, they do not involve subspace estimation: in fact, we rely on those results as tools to establish our own bounds, but the proofs are completely different. The main difficulty in our work is bounding the estimation of the subspaces $X,Y$ using the unlabeled data and controling the propagation of that error through the downstream inductive matrix completion problem. In particular, our results require more stringent assumptions on the sampling distribution as a result of the increased technical difficulty. 

Many of the tools we rely on for the subspace estimation problem come from the matrix \textbf{perturbation theory }and its applications to datascience~\citep{spectralmagic}. Indeed, this book also provides a remarkable array of strong bounds for matrix completion. However, the bounds apply to a different regime: the entries are sampled uniformly at random with Bernouilli sampling (without replacement and without side information) and the emphasis of the results lies in a tighter dependence on the subgaussianity constant of the noise, as well as providing entry-wise estimates. Whilst the book~\citet{spectralmagic} deals mostly with models involving explicit rank restriction, we note that similar results are also known for nuclear norm minimization~\cite{ChenChi20}.

Another nonuniform sampling regime which has gained a lot of attention in recent years is the so called `Missing Not at Random' (\textbf{MNAR}) matrix completion problem~\citep{ma2019MNAR,choi2024matrix,jalan2025optimal},  where the entries are sampled with independent Bernouilli masks whose associated probabilities are obtained by applying a sigmoid function to an unknown low-rank matrix. Thus, this corresponds to an alternative form of nonuniform sampling compared to our i.i.d. setting, and this research direction also involves a low-rank constraint in the sampling distribution. However, there are many notable differences: first, the emphasis is on bounding the (uniform) Frobenius error, making this an \textit{out-of-distribution} problem where the empirical risk is reweighted by inverse propensity scores to compensate for the nonuniformity of the sampling distribution. Second, the `low-rank' condition on the sampling distribution is $\mathcal{P}=\sigma(\Gamma)$ where $\mathcal{P}$ is the matrix of Bernouillli probabilities (the `propensity scores') and a nuclear norm constraint is imposed on $\Gamma$. Whilst this is somewhat comparable to a low-rank condition, it is important to note that the choice $\Gamma\simeq 0\in\R^{m\times n}$ (which is both low-rank and low nuclear norm) leads to a uniform masking probability of $0.5$, which is a \textit{dense} observation regime at odds with the sparse observations regime studied in classic matrix completion settings. In addition to the difference in performance measure, this further complicates any comparison betweeen the results in~\citet{ma2019MNAR} and the approximate recovery literature at the common regimes where very few entries are observed. Third, to compensate for the nonuniformity of the sampling distribution despite the uniform performance measure, an inverse multiplicative factor in the minimum sampling probability for any entry is present in~\citet{ma2019MNAR}, which further distinguishes its results from our own, which apply even if many entries have zero sampling probability. Whilst the problem setting in ~\citet{jalan2025optimal,mcgrath2024learnertransferlearningmethod} also involves matrices with shared subspaces, they are assumed to come from distinct sources and both matrices are partially observed, instead of corresponding to low-rank sampling distributions. Similarly, the idea of a low-rank distribution is studied in~\citet{anandkumar14,vandermeulen2021beyond,Sidibig1} and used in a recommender systems setting in~\citet{poernomo2025probabilistic}, but none of these works involve explicit feedback or a shared subspace.  Lastly, there is a rich literature on semi-supervised learning in a broader machine learning context~\citep{balcan2005pac,blum1998combining,bekker2020learning}. However, the key techniques such as Contrastive Learning~\cite{CRL3,hieuAAAI,hieu2025generalization,alves2024context,ghanooni2024generalization,ghanooni2025mitigating}, typically apply to classification problems rather than regression setting, and such methods usually do not apply to discrete input spaces, such as matrix completion, which comes with its own challenges. 


\section{Main Results}

\subsection{Learning Setting and Assumptions}

\label{sec:assum}

\textbf{Matrix Completion in the i.i.d. setting. } 
We consider a noisy matrix completion problem in the i.i.d. (regression) setting. This setting has been studied in the following papers, however, we reintroduce it with our notation (and in slightly greater formality and generality) for the sake of convenience~\cite{foygel2011learning,ReallyUniform1,ReallyUniform2,LedentIMC,mostrelated,mostrelatedearly}, as it differs from both the problem of exact matrix recovery~\cite{Genius,noisycandes,spectralmagic} or `missing not at random' matrix completion.  The sampling procedure for each sample is i.i.d. according to the following distribution. A full labeled sample $(\xi,\obvsnoo)$ consists in: 
\begin{itemize}
	\item  An \textit{entry} $(i,j)=\xi\in[m]\times [n]$ sampled  from a categorical probability distribution with Probability Mass Function (PMF) given by $P\in\R^{m\times n}$  over $[m]\times [n]$ (i.e. $\sum_{i\leq m,j\leq n}P_{i,j}=1$), and 
	\item A \textit{label} $\obvsnoo\in\R$ drawn from a conditional distribution $\distrimat_{i,j}$ which depends on the entry $\xi=(i,j)$.
\end{itemize}

The entry is considered as the independent variable and the label is considered as the target variable in the sense of supervised learning. Predictors are functions $\predmat\in\R^{m\times n}$ and their performance on a test sample $(\xi,\obvsnoo,\hat{y})$ is measured via a loss function $\lfn: ([m]\times [n]) \times \R \times \R \rightarrow \R^+: (\xi,\obvsnoo,\hat{y})\mapsto \lfn(\xi,\obvsnoo,\hat{y})$. The population level performance of the predictor $\predmat$ is the population expected risk $\lfn(\predmat)=\E_{\xi,\obvsnoo}\left( \lfn(\xi,\obvsnoo, \predmat_{\xi})    \right) $. Often, we also consider its empirical analogue $ \widehat{\E}_\xi\left( \lfn(\xi,\obvsnoo, \pred_{\xi})    \right)=\frac{1}{N} \sum_{o=1}^N \lfn(\xi_o, \obvs,\pred)$, where $\xi_1,\ldots,\xi_N\in [m]\times [n]$ denotes a set of empirical samples. The Bayes predictor or \textit{ground truth matrix} $\ground\in\R^{m\times n}$ is defined by $\ground_{i,j}\in\argmin_{\hat{y}\in\R} \E_{\xi,\obvsnoo}\lfn(\xi,\obvsnoo,\hat{y})$. For instance, if $\lfn$ is the square loss (and in particular, doesn't depend on the entry $(i,j)$) and the observations $\obvsnoo$ are equal to $R_{\xi}+\noisenoo$ for some fixed matrix $R\in\R^{m\times n}$ and some noise $\noisenoo$ satisfying $\E(\noisenoo)=0$, then $R=\ground$.

Going further, we consider a \textbf{semi-supervised setting} where the learner has access to 
\begin{itemize}
	\item $N$ \textit{labeled} observations  $ \{(\xi_1^e, \tilde{G}_1), $ $\dots, (\xi_N^e, \tilde{G}_N)\}$ $:=\mathcal{S}_N $,  drawn i.i.d. from the distribution above, and
	\item $M$ i.i.d. \textit{unlabeled} samples $(i_o,j_o)$ (for $o\leq M$) drawn from the distribution $P$  alone, where we do not have access to the label $\obvsnoo$. 
\end{itemize}   

We write $\observe=\frac{1}{M} \sum_{o=1}^M 1_{i_o,j_o}$ for the matrix of observed \textit{unlabeled} samples (i.e., the empirical analogue of the PMF $P$), and make the following key assumptions. 

\begin{assumption}[Boundedness and Lipschitzness of the Loss Function]
	\label{assum:loss}
	The loss function $\lfn$ is uniformly bounded by $\entry$ and for any value of $(i,j)$, is Lipschitz continuous with Lipschitz constant $\lip$: 
	For all $(i,j)\in[m]\times [n]$, 
	\begin{align}
		\left|		\lfn((i,j),y,\hat{y}_1)\right| &\leq \entry  \quad \quad \text{and}\\ 
		\left|		\lfn((i,j),y,\hat{y}_1)-		\lfn((i,j),y,\hat{y}_2)\right|&\leq \lip  \left|\hat{y}_1-\hat{y}_2\right|.
	\end{align}
\end{assumption}

\begin{assumption}[Existence of Shared Low-rank Subspace]
	\label{assum:lowrankshared}
	We assume that the sampling distribution $P$ has low-rank $d$, i.e. $P=U^{*} \Sigma^{*} [V^{*}]^{\top}$  where $U^*\in \R^{m\times d}$ and $V^*\in  \R^{n\times d}$. We assume that the ground truth matrix can be represented as $\R^{m\times n} \ni \ground = U^* \midmatstar^{-} [V^{*}]^{\top}$ for some matrix $\midmatstar^{-}$. For convenience and to stick to the scaling used in other literature on inductive matrix completion, we introduce the notation $X^*:= \sqrt{\frac{m}{d}}U^*$,  $Y^*:= \sqrt{\frac{n}{d}}V^*$ and $\midmatstar:= \sqrt{\frac{d^2}{mn}}\midmatstar^{-} $.  Thus we certainly have 
	\begin{align}
		\label{eq:realisability}
		\ground = X^{*} \midmatstar  [Y^*]^{\top}.
	\end{align}
	
	We also assume that $\|\midmatstar\|_*\leq \mathm$ for some constant $\mathm$. 
\end{assumption}
This assumption forms the basis of our novel learning setting: to the best of our knowledge, it is the first attempt to formalize the existence of a relationship between implicit feedback (the sampling distribution over ratings) and explicit feedback (the matrix of latent rankings). We choose this assumption because this is the simplest way to assume such a relationship in a matrix completion setting.

\begin{assumption}[Incoherence of the Shared Low-rank Subspaces]
	
	We also write $\xboundstar = \max_{i\leq m} \|[X^*]_{i,\nbull}\|$ and $\yboundstar = \max_{j\leq n} \|[Y^*]_{j,\nbull}\|$, which can be interpreted as incoherence measures for the row and column subspaces of the ground truth $\ground$ and are treated as $O(1)$ constants. We also define the following relevant quantity, which can be interpreted as an overall incoherence constant:
	\begin{align}
		\pstarr=\pstarrexpand.
	\end{align} 	
\end{assumption}

\begin{assumption}[Approximately Uniform Marginals]
	\label{assum:unimarg}
	We assume that the marginals of $P$ are bounded uniformly as follows, for some constant $\kappa_1$ 
	\begin{align}
		\label{eq:updatedcondsrepeat}
		p_i\leq \frac{\kappa_1}{m}  \andd q_j \leq \frac{\kappa_1}{n},
	\end{align}
	where $p_i:=\sum_{j\leq n}P_{i,j}$ and $q_j:=\sum_{i\leq n}P_{i,j}$ are the marginals for the $i$th row and $j$th column respectively. 
\end{assumption} 

This assumption is common in works on approximate recovery in matrix completion. For instance, ~\citet{foygel2011learning,ReallyUniform1} and~\citet{ledent2024generalization} use it for some of their stronger results (typically, a non-trivial result is shown without this assumption, and a tighter bound is shown with this assumption). This assumption is weaker than the uniform sampling assumption which is typical in exact recovery results such as those of~\citet{CandesRecht,SimplerMC,spectralmagic,ChenChi20}.

\begin{assumption}[Well-conditioning of PMF]
	\label{assum:wellconditioned}
We assume the sampling distribution $P$ is well conditioned. More precisely, we rely on the following conditioning number in our bounds: 
\begin{align}
	\label{eq:defineconditionnumber}
	\kappa_*= \frac{\|P\|}{\deltastar},
\end{align}
where $\deltastar$ is the last singular value of $P$.
\end{assumption}

\begin{assumption}[Well-conditioning of $X,Y$]
	\label{assum:kappa2}
	
	We as assume the following bound on the spectral norm of the ground truth side information matrices $X,Y$:
		\begin{align}
		\label{eq:kappa2cond_main}
		\|X^*\|\leq \xboundstar \sqrt{\kappa_2\frac{m}{d}} \andd 	 	\|Y^*\|\leq \xboundstar \sqrt{\kappa_2\frac{n}{d}}.
	\end{align}
\end{assumption}

\begin{assumption}[Bound on Maximum Sampling Probability]
	\label{Assum:gamma}
	We assume that the maximum possible entry of the sampling distribution $P$ is bounded by a constant $\nunif$. 
	Thus,  $\nunif$ is defined as $\max_{i,j}P_{i,j}mn$ so that for all $i,j$: 
	\begin{align}
		P_{i,j}\leq \frac{\nunif}{mn}.
	\end{align}
\end{assumption}

Assumption~\ref{Assum:gamma}, which requires a uniform upper bound on the sampling probability for any entry, is the most restrictive of ours. Still, Assumption~\ref{assum:unimarg} implies that Assumption~\ref{Assum:gamma} is always satisfied with at least a coarse estimate $\nunif \leq \kappa_1 [m+n]$.  Relying on this still yields non-trivial results, but at the cost of an assumption of the form $N\geq \frac{m+n}{2}$: one needs at least $O(1)$ \textit{labeled} interactions in each row/column. Whilst this is a significant restriction (because the sample complexity in terms of labeled examples isn't truly independent of the size of the side information $d$), the result is still of interest as this is an absolute threshold rather than a true contribution to the error bound. If $\nunif$ is constant then the results hold without this caveat.  Lastly, we note that, in contrast with  the \textit{lower} bound on the sampling probability in~\citet{ma2019MNAR}, even Assumption~\ref{Assum:gamma} with an \textit{absolute} constant $\Gamma$ covers non-trivial cases and doesn't necessarily imply that the sampling distribution is approximately uniform. 
 Indeed, suppose that the $n$ rows and columns are each divided into $k$ `groups' or clusters. One can visualize this as a `check board' where the 1st $n/k$ rows and cols belong to the 1st group, but the example is more general: group memberships can be unknown. Assumption~\ref{assum:unimarg} implies that both the ratings $G_{i,j}=\tilde{G}_{g(i),g(j)}$ and the probabilities $P_{i,j}=\tilde{P}_{g(i),g(j)}$ only depend on the (unknown) user and item groups. Assumption~\ref{Assum:gamma} only concerns $\tilde{P}\in\mathbb{R}^{k\times k}$ and is independent of $n$. For instance, if $\tilde{P}=\frac{1}{2} I/k+\frac{1}{2}U$ where $U$ is uniform, then $\Gamma=k/2+1/2=[d+1]/2$ and the term $O(\sqrt{\frac{\Gamma nr}{MN}})$ $\leq O(\sqrt{\frac{d}{N}}\sqrt{\frac{nr}{M}})$ in our results below is benign. In this case, the assumptions hold with $\Gamma=O(d)$, $\kappa^*,\kappa_1,\kappa_2,\in O(1)$. In fact, user/item clustered settings are related to the stochastic block model~\cite{sbm1,sbm2} and have been studied in various works both within~\cite{omic,autobomic,alves2024regionalization} and outside~\cite{Vincent} matrix completion, achieving strong performance in both cases.

\subsection{Model}

We assume that the learner is aware of the existence of a shared low-rank subspace and proceeds as follows: 
\begin{itemize}
	\item\textbf{Step 1:} first, the unlabeled data is used to estimate the subspaces via singular value decomposition: the matrix $\tosvdize$ is constructed, and a singular value decomposition of order $d$ is performed on it, yielding the SVD $\tosvdize= U\Sigma V^\top$. Then, the side information matrices $X,Y$ are constructed as $X = \sqrt{\frac{m}{d}} U$ and $Y = \sqrt{\frac{n}{d}} V$.
	\item\textbf{Step 2:} next,  after fixing an upper bound constraint $\mathm$ for the nuclear norm of $\midmat$, and the ground truth matrix is estimated via empirical risk minimization using the classic Inductive Matrix Completion (IMC) algorithm: 	
    \begin{align}
        \label{eq:inuniformalgotheorymain}
	\midmat &= \argmin_{\|\midmat\|_* \leq \mathm} \lfn(X\midmat Y^\top) \\
        &= \argmin_{\|\midmat\|_* \leq \mathm} \frac{1}{N} \sum_{o=1}^{N} \lfn\left( (X\midmat Y^\top)_{\xi_{o}}, \obvs \right). \nonumber
    \end{align}
\end{itemize}

We will also use the notation $\erank$ for the quantity $\frac{\mathm^2}{d^2}$, which scales as the rank of the matrix $\midmat$ (and therefore, $\ground$): although this is a real number which depends on a tunable parameter $\mathm$, in the case of a homogeneous spectrum and $O(1)$ entries, setting $\mathm$ large enough to ensure that $\erank$ is $O(\text{rank}(\ground))$ will guarantee that the ground truth is representable. In the noiseless case, this guarantees that all our bounds also hold for the Population Risk. See~\citet{foygel2011learning} and~\citet{LedentIMC} for more details.

\textbf{Remark:} In applications, the minimization problem from eq.~\eqref{eq:inuniformalgotheory} is replaced by a Lagrangian form, which can be solved with gradient methods in Pytorch.  The equivalence between the regularizer term $\left[\|A\|_{\Fr}^2  +\|B\|_{\Fr}^2\right] $  and the nuclear norm of $\midmat$ is a consequence of the classic Lemma 6 in~\cite{softimpute}. The precise algorithm can be found in Alg.~\ref{alg:DAMC}. 
\begin{align}
\textbf{minimize} \> \frac{1}{N} \sum_{o=1}^{N} \lfn\left( (XAB^\top Y^\top)_{\xi_{o}}, \obvs \right)\\+\lambda_{\mathm} \left[\|A\|_{\Fr}^2  +\|B\|_{\Fr}^2\right] \label{eq:inuniformalpractical}
\end{align}

\begin{algorithm}
	\caption{DAMC (Distributionally Aware Matrix Completion)}\label{alg:DAMC}
	\begin{algorithmic}[1]
		\Require Observed unlabeled data $\mathcal{S}_M = \{\xi_1, \dots, \xi_M\}$,
		Observed labeled data $\mathcal{S}_N = \{(\xi_1^e, \tilde{G}_1), \dots, (\xi_N^e, \tilde{G}_N)\}$, parameters $d$ (size of the side information), and upper bound constraint $\mathm $ on the nuclear norm of the core matrix.
		\Ensure Predictions $X\midmat Y\in \mathbb{R}^{m\times n}$
		\State Construct the matrix $\tosvdize= \frac{1}{M} \sum_{o=1}^{M} \xi_{o}$.
		\State Compute the truncated SVD $\tosvdize= U\Sigma V^\top$ (up to rank $d$) and define $X = \sqrt{\frac{m}{d}} U$ and $Y = \sqrt{\frac{n}{d}} V$.
		\State Solve the optimization problem:
		\begin{align}
			\midmat &= \argmin_{\|\midmat\|_* \leq \mathm} \lfn(X\midmat Y^\top) \nonumber \\
                    &= \argmin_{\|\midmat\|_* \leq \mathm} \frac{1}{N} \sum_{o=1}^{N} \lfn\left( (X\midmat Y^\top)_{\xi_{o}}, \obvs \right) \label{eq:inuniformalgotheory}
		\end{align}
		\State \Return Core matrix $\midmat$; side information matrices $X$, $Y$, and matrix of predictions $\widehat{Z} = X\midmat Y^\top$.
	\end{algorithmic}
\end{algorithm}

\subsection{Main Results}


\begin{theorem}
	\label{thm:unif_generalization_bound_new_main}
	
Instate Assumptions~\ref{assum:loss},~\ref{assum:lowrankshared},~\ref{assum:unimarg}, ~\ref{assum:wellconditioned},~\ref{assum:kappa2},and~\ref{Assum:gamma}, then: 
	
	\begin{align}
		\label{cond:Magain??}
		M\geq 470  \log\left(\frac{4[m+n]}{\delta}\right)   \kappa_*^2\pstarr^2[m+n].
	\end{align}
	
	With probability greater than $1-\delta$ over the draw of both the implicit and explicit feedbacks,  the following generalization bound  holds simultaneously over any predictor  $X\midmat Y^\top\in\R^{m\times n}$  for $\midmat\in\R^{d\times d}$ such that $\|\midmat\|\leq \mathm$
	\begin{align}
	&\lfn \left(X\midmat Y^\top \right) -\lfnhat\left( X \midmat Y^\top  \right) \leq \\
        &\frac{2\entry \log(6/\delta )}{\sqrt{N}} + 16\lip \pstarr^2 \sqrt{\kappa_1\kappa_2} \log(2de) \sqrt{\frac{d\erank }{N}}\nonumber + \\ 
        &75\pstarr \ell  \kappa_*\kappa_1\log\left(\frac{12[m+n]}{\delta }\right)\sqrt{\frac{[m+n]\erank}{M}} +  \nonumber \\
        &25\pstarr \ell  \kappa_*\log\left(\frac{12[m+n]}{\delta }\right)\sqrt{\frac{[m+n]\nunif \erank}{MN}} \nonumber 
	\end{align}
	where as usual, $\lfn(Z):=\E\left( \lfn(Z_{\xi},\obvsnoo)\right)$  and $\lfnhat(Z):=\frac{1}{N} \sum_{o=1}^N \lfn(Z_{\xi^e_o},\obvs)$.
	
\end{theorem}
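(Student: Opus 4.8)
The plan is to decompose the generalization gap into two essentially independent pieces: a "standard IMC generalization" piece controlled by the labeled sample size $N$, and a "subspace perturbation" piece controlled by the unlabeled sample size $M$. The key conceptual observation is that the predictor $X\midmat Y^\top$ built from the estimated side information $X,Y$ can be compared to a predictor living in the ideal IMC hypothesis class spanned by the true $X^*,Y^*$. Concretely, write $X\midmat Y^\top = X^* \midmat' [Y^*]^\top + (\text{error terms})$, where $\midmat' = R_l \midmat R_r^\top$ for suitable change-of-basis matrices $R_l, R_r$ relating the empirical singular subspaces of $\tosvdize$ to those of $P$. The error terms involve the "rotated minus true" factors $XR_l - X^*$ and $YR_r - Y^*$, whose norms are exactly what matrix perturbation theory (Davis–Kahan / Wedin-type bounds, as in the cited perturbation-theory book) controls in terms of $\|\tosvdize - P\|$ and the spectral gap $\deltastar$. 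Step one is therefore to invoke a matrix Bernstein / matrix Chernoff bound on $\|\observe - P\| = \|\tosvdize - P\|$: since $\observe$ is an average of $M$ i.i.d. rank-one indicator matrices with marginals bounded by Assumption~\ref{assum:unimarg}, one gets $\|\tosvdize-P\| \lesssim \sqrt{\frac{(m+n)\log((m+n)/\delta)}{M}}$ up to the marginal constants, which is exactly why the condition~\eqref{cond:Magain??} on $M$ (ensuring this deviation is a small fraction of $\deltastar$) appears.

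**Next I would** set up the perturbation bookkeeping carefully. Using $\|\observe - P\|$ and $\kappa_* = \|P\|/\deltastar$, apply the subspace perturbation bounds to get $\|XR_l - X^*\| \lesssim \kappa_* \pstarr \sqrt{\frac{(m+n)\log(\cdot)}{M}} \cdot (\text{scaling factors})$ and analogously for the $Y$ side; incoherence (the $\xboundstar,\yboundstar$ and hence $\pstarr$ constants) and Assumption~\ref{assum:kappa2} are what convert operator-norm subspace error into the row-wise / entry-wise control needed downstream. Then expand
\begin{align}
X\midmat Y^\top - X^*\midmat'[Y^*]^\top &= (XR_l - X^*)\midmat'[Y^*]^\top \nonumber \\
&\quad + X^*\midmat'(YR_r - Y^*)^\top \nonumber \\
&\quad + (XR_l - X^*)\midmat'(YR_r-Y^*)^\top. \nonumber
\end{align}
Each of these three correction terms is a low-rank matrix whose relevant norm (to be fed into the $\lip$-Lipschitz loss, either in population expectation weighted by $P$, or empirically averaged over the $N$ labeled points) is bounded by the subspace errors above times $\|\midmat'\|_* \le \mathm$. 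The population contribution of these corrections, via the sampling probability bound $\Gamma$ from Assumption~\ref{Assum:gamma}, produces the $\sqrt{\frac{(m+n)\Gamma \erank}{MN}}$ cross-term; the part that is "seen" uniformly produces the $\sqrt{\frac{(m+n)\erank}{M}}$ term — this is where the $S_l, S_r, S_{lr}$ sign-pattern Rademacher complexities ($\rademexr$, $\rademexrl$, etc. defined in the preamble macros) enter: one bounds the empirical process $\sup_{\midmat}|\,\lfn - \lfnhat\,|$ restricted to these correction terms by a Rademacher complexity that decomposes along the sign patterns of the perturbation matrices.

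**Then** for the main (non-perturbed) term $X^*\midmat'[Y^*]^\top$, which genuinely lives in a bona fide IMC hypothesis class with \emph{known} side information $X^*,Y^*$ of dimension $d$ and nuclear-norm budget $\mathm$ on the core, I invoke the existing generalization bound for Inductive Matrix Completion under approximately uniform marginals — this is precisely the prior result (cf.~\citet{LedentIMC,foygel2011learning}) that the excerpt advertises as a black-box tool. That yields the term $\lip \pstarr^2 \sqrt{\kappa_1\kappa_2}\log(2de)\sqrt{\frac{d\erank}{N}}$, with the incoherence/conditioning constants $\pstarr, \kappa_1, \kappa_2$ tracking through the IMC bound's dependence on the side-information geometry. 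Separately, the $\frac{2\entry\log(6/\delta)}{\sqrt{N}}$ term comes from a bounded-differences (McDiarmid / Hoeffding) concentration of the empirical risk around its expectation for a \emph{fixed} predictor, with range $\entry$. Finally I would union-bound over the handful of high-probability events (matrix Bernstein for $\|\observe - P\|$ at level $\delta/\text{const}$, the IMC uniform bound at level $\delta/\text{const}$, the Hoeffding term at level $\delta/\text{const}$ — hence the $\log(6/\delta)$, $\log(12[m+n]/\delta)$, $\log(4[m+n]/\delta)$ appearing with different constants) and collect terms.

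**The main obstacle** I expect is the perturbation-propagation step: cleanly controlling $\|XR_l - X^*\|$ (and its row-wise analogue) and, more delicately, showing that the error this induces in the \emph{downstream} empirical risk minimization does not blow up — i.e., that the argmin $\midmat$ computed with the \emph{wrong} side information still yields a predictor close to a good predictor in the true class. This requires (i) choosing the rotations $R_l, R_r$ optimally (orthogonal Procrustes between empirical and true singular subspaces), (ii) a Wedin-type bound that needs the gap condition~\eqref{cond:Magain??} to guarantee the empirical top-$d$ subspace is well-separated from the tail, and (iii) carefully tracking how the nuclear-norm constraint $\|\midmat\|_*\le\mathm$ interacts with the change of basis (since $R_l,R_r$ are not exactly orthogonal, $\|R_l\midmat R_r^\top\|_*$ may differ from $\|\midmat\|_*$ by a factor that must be absorbed into the constants). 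Managing these constants so that the final bound has the clean $\widetilde O(\sqrt{nd/M}) + \widetilde O(\sqrt{dr/N})$ shape — rather than a product of the two error sources — is the technical heart of the argument, and is exactly the "disentangling" the paper claims as its contribution.
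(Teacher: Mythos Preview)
Your proposal is essentially the paper's own proof: it uses the same rotation matrices $R_l,R_r$ (Procrustes), the same three-term expansion of $X\midmat Y^\top - X^*\midmat'[Y^*]^\top$, matrix Bernstein plus a Wedin-type bound for the subspace error, the black-box IMC bound from~\citet{LedentIMC} for the $X^*\midmat'[Y^*]^\top$ piece, and a final union bound over three events at $\delta/3$ each. Two small corrections: first, $R_l,R_r$ \emph{are} chosen orthogonal (as minimizers over $\mathcal{O}^{d\times d}$), so $\|\midmat'\|_* = \|\midmat\|_*$ exactly and your worry about absorbing a change-of-basis factor is unnecessary; second, the paper does not use Rademacher complexities for the perturbation terms but instead splits them into a population-level piece, handled via the sign trick $\langle |A|,P\rangle = \langle A, P\circ\mathrm{sign}(A)\rangle \le \|A\|_*\,\|P\circ S\|$ together with the elementary bound $\|P\circ S\| \le p^*$ (this is where the $\kappa_1$ term arises), and an empirical-level piece, handled by a scalar non-centered Bernstein inequality on the loss differences (this is where $\Gamma$ enters via the second-moment bound).
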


We also have the following immediate consequence in terms of excess risk. 
\begin{corollary}
	\label{cor:unimargfinal_new_main}
	
	
	Let $\widehat{Z}$ be the matrix output by algorithm~\ref{alg:DAMC}, under assumptions 1-7, we have the following excess risk bound w.p. $\geq 1-\delta$ : $\E(\lfn(\widehat{Z}, \obvsnoo))-\widehat{\E}(\lfn(\widehat{Z}, \obvsnoo)) \leq$
    \begin{align}  
        &O\left[  [ \lip+\entry]  \kappa_* \pstarr\kappa_1\log\left(\frac{[m+n]}{\delta }\right)\sqrt{\frac{[m+n]\erank}{M}}\right] + \nonumber \\
        &O\left[  [ \lip+\entry]  \kappa_* \pstarr\kappa_1\log\left(\frac{[m+n]}{\delta }\right)\sqrt{\frac{\Gamma[m+n]\erank}{MN}}\right] + \nonumber \\
	&O\left[\lip \pstarr^2 \sqrt{\kappa_1\kappa_2} \log(2de) \sqrt{\frac{d\erank }{N}}  \right] \nonumber.
    \end{align}
\end{corollary}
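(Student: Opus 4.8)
The plan is to obtain Corollary~\ref{cor:unimargfinal_new_main} by simply instantiating Theorem~\ref{thm:unif_generalization_bound_new_main} at the predictor produced by Algorithm~\ref{alg:DAMC} and then cleaning up constants. Recall that Algorithm~\ref{alg:DAMC} first builds $X=\sqrt{m/d}\,U$ and $Y=\sqrt{n/d}\,V$ from the rank-$d$ truncated SVD of $\tosvdize=\observe$, and then takes $\midmat$ to be a solution of the constrained empirical risk minimization~\eqref{eq:inuniformalgotheory}; in particular $\midmat$ satisfies the nuclear-norm constraint $\|\midmat\|_*\le\mathm$ by construction, which implies the spectral-norm constraint appearing in Theorem~\ref{thm:unif_generalization_bound_new_main}. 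Hence $\widehat{Z}=X\midmat Y^\top$ is one of the predictors over which the bound of that theorem is uniform, and the first step is to observe that on its probability-at-least-$1-\delta$ event the four-term bound applies verbatim to $\widehat{Z}$, yielding a bound on $\lfn(\widehat{Z})-\lfnhat(\widehat{Z})=\E(\lfn(\widehat{Z},\obvsnoo))-\widehat{\E}(\lfn(\widehat{Z},\obvsnoo))$.

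Next I would verify the sample-size precondition~\eqref{cond:Magain??}: it is either assumed outright, or else, using the coarse estimate $\nunif\le\kappa_1[m+n]$ from Assumption~\ref{assum:unimarg} together with the accompanying ``$N\ge(m+n)/2$'' caveat discussed after Assumption~\ref{Assum:gamma}, it is automatically satisfied. Then I would fold the four terms of Theorem~\ref{thm:unif_generalization_bound_new_main} into the three $O(\cdot)$ terms of the Corollary: the term $16\lip\pstarr^2\sqrt{\kappa_1\kappa_2}\log(2de)\sqrt{d\erank/N}$ is already exactly the third displayed term; the two terms involving $\sqrt{[m+n]\erank/M}$ and $\sqrt{\nunif[m+n]\erank/(MN)}$ become the first and second displayed terms after the harmless replacements $\ell\le\lip+\entry$ (the theorem's third and fourth terms use the Lipschitz constant, which we only weaken), $\kappa_1\ge 1$, and $\log(c[m+n]/\delta)=O(\log([m+n]/\delta))$ for each fixed constant $c$; and the leading term $2\entry\log(6/\delta)/\sqrt N$ is dominated by $O\bigl([\lip+\entry]\kappa_*\pstarr\kappa_1\log([m+n]/\delta)\sqrt{[m+n]\erank/M}\bigr)$, since each of $\pstarr,\kappa_1,\kappa_*,d,\erank$ and the logarithmic factors is at least of constant order in the regime of interest (recall $M$ is the large unlabeled sample size), so it can be absorbed into the first displayed term. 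Collecting the remainder gives exactly the stated bound.

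I do not expect any genuine obstacle here: the entire mathematical content lives in Theorem~\ref{thm:unif_generalization_bound_new_main}, and the only points needing a sentence of justification are (i) that the output of~\eqref{eq:inuniformalgotheory} indeed lies in the class $\{X\midmat Y^\top:\|\midmat\|_*\le\mathm\}$ over which that theorem is uniform, and (ii) that the theorem's sample-size hypothesis is met. If one wanted a true \emph{excess risk} guarantee against the Bayes predictor $\ground$ rather than the empirical generalization gap that is literally displayed, the additional ingredients would be the optimality of $\midmat$ for the empirical objective, a one-sided Hoeffding deviation of order $\entry\sqrt{\log(1/\delta)/N}$ for $\widehat{\E}(\lfn(\ground))-\lfn(\ground)$, and the (approximate) representability of $\ground$ in the learned basis $X,Y$ --- the last being a byproduct of the subspace-recovery estimates already established inside the proof of Theorem~\ref{thm:unif_generalization_bound_new_main} (cf.\ the realizability identity~\eqref{eq:realisability} and the remark on the population risk following Algorithm~\ref{alg:DAMC}); none of these contributes a term larger than those already present.
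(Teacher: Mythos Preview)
Your overall plan---instantiate Theorem~\ref{thm:unif_generalization_bound_new_main} at the Algorithm~\ref{alg:DAMC} output $\widehat Z=X\midmat Y^\top$ and then tidy constants---is exactly the paper's approach, and your check that $\widehat Z$ lies in the uniform class is fine. But two of the bookkeeping steps you describe are actually wrong.

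First, your treatment of the precondition~\eqref{cond:Magain??} on $M$ is confused. The estimate $\nunif\le\kappa_1[m+n]$ and the $N\ge(m+n)/2$ caveat pertain to the $\nunif$-dependent term in the \emph{conclusion} of the theorem; they have nothing to do with the hypothesis on $M$, which involves neither $\nunif$ nor $N$. The paper disposes of this hypothesis differently: since the loss is bounded by $\entry$, the left-hand side never exceeds $2\entry$, while if $M$ violates~\eqref{cond:Magain??} then the term $\sqrt{[m+n]\erank/M}$ is already of constant order and the first $O(\cdot)$ term alone exceeds $2\entry$. Hence the bound holds vacuously in that regime, and the condition is simply absorbed into the $O$-notation.

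Second, your claim that $2\entry\log(6/\delta)/\sqrt N$ is dominated by the $\sqrt{[m+n]\erank/M}$ term is backwards: $M$ being the large sample size makes that term \emph{small}, and nothing in the assumptions links $1/\sqrt N$ to $\sqrt{[m+n]/M}$. The natural home for the $\entry/\sqrt N$ contribution is the $\sqrt{d\erank/N}$ term (using $d\ge 1$); the paper does not spell this out either, relying on the $O(\cdot)$ notation and a reference to the ``classic lemma of learning theory'' to sweep up such lower-order pieces.
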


If only Assumptions 1-6 hold, then one can replace $\Gamma$ (from Assumption 7) by the cruder estimate $\Gamma= \kappa_1 [m+n]$ which can be obtained from assumption~\ref{assum:unimarg} instead. This allows us to replace Assumption~\ref{Assum:gamma} by 
    \begin{align}
        N\geq \frac{m+n}{2}, \quad \text{so that we have}\label{cond:itchy}
    \end{align}
    \begin{align}
	&\E(\lfn(\widehat{Z}, \obvsnoo))- \widehat{\E}(\lfn(\widehat{Z}, \obvsnoo)) \leq \\
        &\widetilde{O}\Bigg[[\lip+\entry]  \kappa_* \pstarr\kappa_1^{3/2}\sqrt{\frac{[m+n]\erank}{M}} +  \lip \pstarr^2 \sqrt{\kappa_1\kappa_2} \sqrt{\frac{d\erank }{N}}\Bigg],\nonumber
	\end{align}
	where the notation $\widetilde{O}$ hides polylogarithmic factors in $m,n,\delta$. Although the condition~\eqref{cond:itchy} implies the number of labeled samples $N$ must pass a threshold which is linear in the size of the full matrix, this is a fixed threshold which doesn't depend on the desired error level. In addition, this is only a worst-case scenario: one can alternatively assume that Assumption 7 holds with $\Gamma\simeq d$, also allowing the absorption of the higher order term $\sqrt{\frac{\Gamma[m+n]\erank}{MN}}$ into the others. 

In all cases, treating $\kappa_1,\kappa_2,\kappa_*, \pstar, \lip, \entry$ as constants and under either assumption 7 with $\Gamma= O(d)$ constant or condition~\eqref{cond:itchy}, we see that the error scales as $$\widetilde{O}\left(  \sqrt{\frac{[m+n]\erank}{M}} + \sqrt{\frac{d\erank }{N}} \right).$$ 

Here, the first term corresponds to the error in the estimation of the shared low-rank subspaces with the unlabeled data, and the second one corresponds to the error in estimating the ground truth matrix assuming a perfect knowledge of the side information matrices $X,Y$. Thus, accurate recovery can be performed as long as we have at least $M=\widetilde{O}\left( [m+n]\erank\right)$ unlabeled samples and $N=\widetilde{O}\left(d\erank\right)$ \textit{labeled} samples. Thus, the result shows that the errors corresponding to the estimation of the common subspace and the estimation of the ground truth matrix based on the subspace information only combine additively. In particular, this implies that successful recovery of the ground truth matrix (in terms of in-distribution excess risk) is possible with only a very small number of labeled samples, as long as a larger number of unlabeled samples is available. This conclusion is of interest in the field of recommender systems, where the labeled interactions correspond to `explicit feedback' and the unlabeled interactions correspond to `implicit feedback'. 

This observation is in sharp contrast to the sample complexities which can be obtained via a direct application of existing MC results (ignoring the unlabeled samples): if we were to apply the state-of-the-art results for matrix completion in the i.i.d. setting with uniform marginals without the use of the side information matrices $X,Y$, one would obtain a bound of $\widetilde{O}\left(\sqrt{\frac{[m+n]\erank}{N}}\right)$ (cf.~\citet{foygel2011learning}): the number of labeled samples would need to be as high as $\widetilde{O}\left([m+n]\erank\right)$. In contrast, we only require  $\widetilde{O}\left(d\erank\right)$ labeled interactions. A remarkable property of this result is that the bound is meaningful even if the average number of labeled samples per row/column is vanishingly small, as long as there are $\widetilde{O}\left(\erank\right)$ \textit{unlabeled} samples in each row or column.

\textbf{Remarks on the value of $d$ in Assumption~\ref{assum:lowrankshared} and Assumptions~\ref{assum:wellconditioned} and~\ref{assum:kappa2}:} Well-conditioning assumptions such as Assumption~\ref{assum:lowrankshared} and Assumptions~\ref{assum:wellconditioned} and~\ref{assum:kappa2} are common in matrix perturbation theory~\cite{spectralmagic}. We note that true rank $r$ of the matrix $\ground$ can be much smaller than the dimension $d$ of the shared low-rank subspace. Indeed, the core matrix $\midmatstar$ can be low-rank. Thus,  Assumption~\ref{assum:lowrankshared}, ~\ref{assum:wellconditioned} and~\ref{assum:kappa2} can be summarized as follows: (1) the sampling distribution $P$ is well approximated by a rank $d$ matrix whose row and column spaces include those of the ground truth and (2) the row and column spaces of the ground truth $\ground$ are (possibly strict) subspaces of those of the sampling distribution $P$. Cf. Remark 1 in the Appendix for more details.

\section{Experiments}

We perform both synthetic and real data experiments to validate the pertinance of our bounds. Our key claims are: 

\begin{itemize}
    \item \textbf{Claim 1} The errors stemming from the subspace estimation (with the unlabeled samples) and (inductive) matrix completion  components only combine additively.
    
    \item \textbf{Claim 2} In recommender systems datasets, unlabeled interaction data (often referred to as `implicit feedback') contains relevant information to estimate the row and column subspaces of the ground truth matrix of labeled interaction data (containing the ratings from 1 to 5 given by each user to the interacted items).
\end{itemize}

We validate Claim 1 with synthetic data experiments and Claim 2 with real data respectively.

\subsection{\textcolor{black}{Synthetic Data Experiments}}

To demonstrate the decomposition of the error into independent terms corresponding to the estimation errors of the labeled and unlabeled data respectively, we generated $\ground,P\in\R^{200\times 200}$ satisfying  our assumptions with $d=\erank=4$. The detailed experimental setup can be found in the Appendix. For a broad range of values of both $N$ and $M$ ($M\in\{ 10000, 20000, \ldots, 100000\}$ and $N\in\{50, 100, 150, \ldots, 1000\}$), we evaluate the average generalization error over $30$ independent runs. This range is selected because $N=100, M=100000$ results in perfect recovery up to a high decimal point. We compared two quantities: 
\begin{itemize}
	\item[1] The generalization gap (test error $-$ training error), and 
	\item[2] A disentangled estimate of the generalization error calculated as follows: 
	\begin{align}
		\label{eq:disentangle}
		&\disentangle(M,N)=\nonumber \\
            & \GAP(M,1000)+ \GAP(100000,N), 
	\end{align}
	where $1000$ an $100000$ are the maximum possible values for $N$ and $M$ respectively. 
\end{itemize}

Thus, the two terms in equation~\eqref{eq:disentangle} can be interpreted as corresponding to the errors introduced from the estimation of the subspace and the ground truth matrix respectively. The results are presented below in Figure~\ref{fig:synthetic}.

\begin{figure}[htbp]
	\centering
	\includegraphics[width=1.0\linewidth]{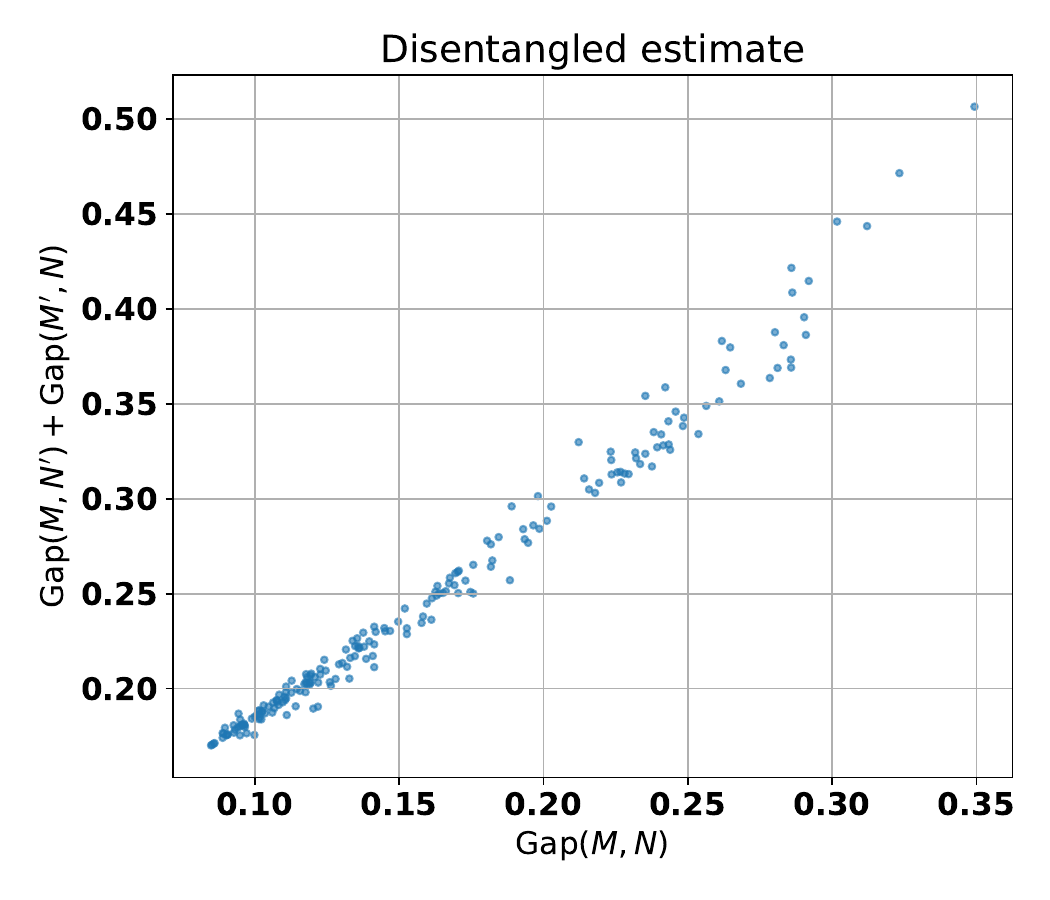}
	\caption{Comparison of generalization error (x-axis) and the corresponding disentangled estimate (y axis) in the synthetic dataset. Each point in the scatter plot corresponds to one configuration $(M,N)$, with the results averaged over 30 independent runs. }\label{fig:synthetic}
\end{figure}
We observe a strong correlation between the error and its disentangled estimate. This suggests the two forms of errors indeed combine additively without strong interactive effects.

\begin{table*}[htp!]
  \centering
  \small
  \begin{tabular}{llcccccccc}
    \toprule
    Dataset & Method & 0.0 & 0.05 & 0.1 & 0.3 & 0.5 & 0.7 & 0.9 & 0.95 \\
    \midrule
    \multirow{4}{*}{ML-100K} 
    & userKNN     & 1.0123 & 1.0120 & 1.0151 & 1.0231 & 1.0380 & 1.0706 & 1.1675 & 1.2462 \\
    & IGMC        & 0.9281 & 0.9238 & 0.9321 & 0.9604 & 0.9824 & 1.0268 & 1.1165 & 1.1482 \\
    & Soft Impute & 0.9179 & 0.9324 & 0.9373 & 0.9487 & 0.9616 & 1.0380 & 1.4190 & 1.8230 \\
    & DAMC        & \textbf{0.9068} & \textbf{0.9165} & \textbf{0.9241} & \textbf{0.9354} & \textbf{0.9364} & \textbf{0.9621} & \textbf{1.0060} & \textbf{1.0460} \\
    \midrule
    \multirow{4}{*}{Douban}
    & userKNN     & 0.7946 & 0.7948 & 0.7973 & 0.8048 & 0.8288 & 0.8848 & 0.9639 & 0.9838 \\
    & IGMC        & 0.7437 & 0.7480 & 0.7579 & 0.7665 & 0.8010 & 0.8256 & 0.8713 & \textbf{0.8462} \\
    & Soft Impute & 0.7383 & 0.7443 & 0.7427 & 0.7621 & 0.8289 & 1.0145 & 1.8471 & 3.1966 \\
    & DAMC        & \textbf{0.7178} & \textbf{0.7195} & \textbf{0.7205} & \textbf{0.7259} & \textbf{0.7382} & \textbf{0.7618} & \textbf{0.8323} & 0.8577 \\
    \midrule
    \multirow{4}{*}{Yelp}
    & userKNN     & 1.0955 & 1.1020 & 1.1060 & 1.1219 & 1.1496 & 1.2015 & 1.2230 & 1.2356 \\
    & IGMC        & 1.0707 & 1.0955 & 1.0759 & 1.1150 & 1.0899 & 1.1492 & 1.2028 & \textbf{1.0859} \\
    & Soft Impute & 1.3888 & 1.4010 & 1.4126 & 1.7608 & 1.8428 & 2.0162 & 3.2690 & 3.4140 \\
    & DAMC        & \textbf{1.0320} & \textbf{1.0650} & \textbf{1.0470} & \textbf{1.0283} & \textbf{1.0580} & \textbf{1.0750} & \textbf{1.1290} & 1.1390 \\
    \bottomrule
  \end{tabular}
  \caption{Performance in terms of Root Mean Squared Error (RMSE) across three datasets for varying values of $p$. Lower is better. The best results in each setting are presented in  \textbf{boldface}.}
    \label{tab:combined}
\end{table*}

\subsection{Real Data Experiments}

We also perform real data experiments on three popular datasets to evaluate whether the sampling distribution over observed user-item interactions (often referred to as `implicit feedback') contains information which can be used to improve performance at the prediction of \textit{ratings} on a scale from 1 to 5 (often referred to as the `explicit feedback'). We performed experiments on three well-known datasets: Douban~\cite{zhugraphical,zhu2019dtcdr}, Yelp~\cite{zhang2015character} and MovieLens 100K~\cite{harper2015movielens}. As a result of our hypothesis, which concerns semi-supervised matrix completion, our training setup is somewhat different from standard benchmarks: instead of relying on a training set of labeled interactions, we remove a fraction $p$ of the labels associated to interactions in the training set. In other words, a proportion $(1-p)$ of the observations in the training set contain the observed rating, while the remaining interactions are provided merely in the form of user-item interaction pairs $(i,j)$ with no rating information.  Due to the nonlinearity of real-world data, we tested a slight modification of DAMC where the singular value decomposition of the empirical distribution is replaced by a nonlinear autoencoder. However, the inductive matrix completion components (line 3 of Algorithm~\ref{alg:DAMC}) was kept unchanged. We compare the method to various classic baselines for explicit feedback prediction: UserKNN~\cite{herlocker1999algorithmic}, Softimpute~\cite{softimpute}, and IGMC~\cite{IGMC}. We emphasize that our aim is mostly to demonstrate the validity of our learning paradigm, rather than to provide a state-of-the-art recommender systems model: we show that a method which relies on the unsupervised information can provide better predictions on the unseen labeled test data, compared to purely supervised methods which rely only on the fully observed labeled entries (cf. Table~\ref{tab:combined}).

We observe that DAMC, which relies on the unsupervised information in the pure implicit feedback, significantly outperforms classic methods which rely only on the labeled observations in most situations. This indicates that a relationship between the sampling distribution and the ground truth matrix indeed exists, lending legitimacy to our theoretical learning setting.  In particular, DAMC significantly outperform its counterpart Softimpute (which is the same algorithm without using side information) for all nonzero values of $p$.  Most notably, we observe that for large values of $p$ such as $p=0.90$ and $p=0.95$, many of the baseline models relying only on explicit ratings are not able to perform much better than random, whilst the semi-supervised DAMC can still achieve consistently good performance.

\section{Conclusion}

We introduced a new matrix completion learning setting where the sampling distribution and the ground truth matrix are both low-rank and \textit{share common row and column subspaces}. This setting is inspired by the recommender systems application, where unlabeled interactions (`implicit feedback') are usually much more abundant than labeled interactions (`explicit feedback'). Assuming access to a larger amount $M$ of unlabeled samples and a smaller number $N$ of labeled samples, we show generalization error bounds of the form $\widetilde{O}\left(  \sqrt{\frac{[m+n]\erank}{M}} + \sqrt{\frac{d\erank }{N}} +\sqrt{\frac{\nunif[m+n]r}{MN}} \right).$ When either $\nunif$ (the ratio between the maximum and average sampling probability) is $O(d)$ or $N\geq \frac{m+n}{2}$, the higher-order term $\sqrt{\frac{\nunif[m+n]r}{MN}} $ vanishes, demonstrating a disentanglement between two sources of error: the estimation of the shared low-dimensional subspaces relying on the unlabeled samples, and the estimation of the ground truth matrix. In particular, our results demonstrate the ground truth matrix can be recovered accurately even with a vanishingly small number of labeled interactions per row/column. On real data, we show that unlabeled samples can dramatically improve the performance of explicit feedback prediction methods, lending validity to our assumptions. In future work, it would be interesting to distill the real-data results into a true SVD and to tackle the difficulty of removing Assumption~\ref{Assum:gamma} or the uniform marginals assumption by imposing suitable modifications on the algorithm, or to attempt to derive \textit{optimistic bounds} with a fast decay rate in $N$. 

\section*{Acknowledgements}
This research is supported by the National Research Foundation, Singapore under its AI Singapore Programme (AISG Award No: AISG3-PhD-2025-08-066T). Antoine Ledent and Mun Chong Soo's research was supported by the Singapore Ministry of Education (MOE) Academic Research Fund (AcRF) Tier 1 grant.
\bibliography{bibliography}

\onecolumn
\appendix
 \counterwithin{equation}{section}
\renewcommand{\theequation}{\thesection.\arabic{equation}}
\appendix
\title{Supplementary Material for "Generalization Bounds for Matrix Completion with Distributional Side Information"}
\maketitle
\clearpage 

\setcounter{equation}{0}

\renewcommand{\theequation}{A.\arabic{equation}}

\section{Table of notations}
\label{tablenotation}

\begin{center}
	\begin{longtable}{c|c}
		\caption{ Table of notations for quick reference}\\
		Notation & Meaning \\
		\hline
		$\|A\|$ & spectral norm of matrix $A$\\
		$A\leq B$ & $B-A$ is positive semi-definite\\
		$\|A\|_{*}$ & nuclear norm of matrix $A$\\
		$I$ & Identity matrix \\\hline
		$\ground\in\R^{m\times n}$ & ground truth matrix\\
		$\xi_1,\ldots,\xi_N$ &  Observed entries for the \textit{labelled} samples\\
		($\in \{1,\ldots,m\}\times \{1,\ldots,n\}$)  & \\
		$\obvs$ & Observed label for the $o$th observation $(\xi_o, \obvs)$\\
		$\zeta_\xi$ & \makecell{Noise of observed at sample $\xi$\\  i.e. $\obvsnoo=\ground_{\xi}+\noisenoo_{\xi}$}\\
		$X\in \R^{m\times d}$ (resp. $Y\in\R^{n\times d}$) & Row (resp. column) side information matrix\\
		$\midmat$ & matrix to optimize (predictors: $X\midmat Y^\top$)\\
		$S_M=\Omega=\{\xi_1,\ldots,\xi_M\}$ & Observed unlabelled samples\\
		$x_i=X_{i,\nbull}$ & side information vector for $i$th user (row)\\
		$y_j=X_{j,\nbull}$ & side information vector for $j$th item (column)\\
		$\mathbf{x}$  (resp. $\mathbf{y}$) & $\max_i \|x_i\|^2$ (resp. $\max_j \|x_j\|^2$)\\
		$P_{i,j}$ & Probability of sampling $(i,j)$\\
		& =$\P(\xi=(i,j))$\\
		\hline
		$\mathm$ & constraint on $\|\midmat\|_*$ \\
		$\lfn$ & loss function\\
		$b$ & global upper bound on $l$\\\hline
		$\pstar$ & $\pstarexpand$\\
		$\pstarr$ & $\pstarrexpand$ \\
		$\erank$ &   $ \frac{\mathm^2}{d^2}$\\ 
		$\kappa_1$ & $\max(\max_i p_i m,\max_j q_j n)$\\
		$\deltastar$ & $\sigma_{D}$, the last singular value of $P$\\ 
		$\kappa_*$ &  $\frac{\|P\|}{\deltastar}$ \\
		\hline
		$\lip$ & Lipschitz constant of $\lfn$\\
		$\lfn(Z)$ & $\E_{(i,j)\sim p}(\lfn([XMY^\top]_{i,j},G_{i,j}+\zeta_{i,j}))$\\
		$\hat{\lfn}(Z)$ & $\frac{1}{N}\sum_{(i,j)\in \Omega}  \lfn([XMY^\top]_{i,j},G_{i,j}+\zeta_{i,j})$\\
		\hline
		$\rademex$ & $\rademexpand$ \\
		$\prerademex$ & $\prerademexpand$ 
		\label{NOTATIONS}
	\end{longtable}
\end{center}

\clearpage

\appendix

\section{Bounding the Error from the PMF Estimation from Unlabelled Samples. }

\subsection{Bounding the Subspace Recovery Error in Terms of Subspace Distance}

In this section, we aim to prove the following proposition, which ensures that the row and column subspaces recovered from the singular value decomposition of the matrix $\prerademex$ are close to the true subspaces.

\begin{proposition} 
	\label{prop:subspacedistance2}
	Let $UDV^\top$ denote the truncated svd of $\observe$ at rank $d$, where $U\in \R^{m\times d}$ (resp. $V\in \R^{n\times d}$). Denote also  $P=U^{*}D^{*}{V^{*}}^\top$ the svd of $P$. We also write $\deltastar$ for $D^{*}_{d}-D^{*}_{d+1}$.
	W.p. $\geq 1-\delta$ as long     as $ M\geq \log\left(\frac{m+n}{\delta}\right) \left[ \left[\frac{4}{\deltastar [2-\sqrt{2}]} \right]^2\frac{16\pstar}{3}+\frac{32}{\deltastar [2-\sqrt{2}]}\right] $,
	
	\begin{align}
		\label{eq:subspacedistance2}
		&		\max\left(  \min_{R\in\mathcal{O}^{r\times r}}\left\|   UR-U^{*} \right\| , \min_{R\in\mathcal{O}^{r\times r}} \left\|   VR-V^{*} \right\|  \right)  \\& \leq   \frac{2}{\deltastar}  \left[ \sqrt{\frac{16\pstar}{3M}} \sqrt{\log\left(\frac{m+n}{\delta }\right)}+ \frac{16}{M}\log\left(\frac{m+n}{\delta}\right)\right],
	\end{align}	
	where $\pstar:=\pstarexpand$, where $p\in\R^{m}$ (resp.  $q\in\R^{m}$ ) is the vector of marginal row (resp. column) probabilities $p_i=\sum_j P_{i,j}$  (resp.  $q_j=\sum_i P_{i,j}$ ).
\end{proposition}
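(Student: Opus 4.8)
The plan is to treat the order-$d$ truncated singular subspaces of $\observe$ as a perturbation of those of $P$, and to combine a Wedin-type $\sin\Theta$ bound with a matrix-Bernstein control on the spectral norm of the perturbation $E:=\observe-P$.

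\textbf{Step 1: concentration of $E=\observe-P$.} Since $\observe=\frac1M\sum_{o=1}^M 1_{i_o,j_o}$ with the $(i_o,j_o)$ drawn i.i.d.\ from $P$, we may write $E=\frac1M\sum_{o=1}^M X_o$ with $X_o:=1_{i_o,j_o}-P$, an average of i.i.d.\ centered $m\times n$ matrices. Each obeys $\|X_o\|\le\|1_{i_o,j_o}\|+\|P\|\le 2$, since $\|P\|\le\|P\|_F=\sqrt{\sum_{i,j}P_{i,j}^2}\le\sqrt{\max_{i,j}P_{i,j}}\le 1$. A one-line computation gives $\E[X_oX_o^\top]=\diag(p)-PP^\top$ and $\E[X_o^\top X_o]=\diag(q)-P^\top P$, where $p,q$ are the marginal vectors; being differences of PSD matrices these satisfy $\|\diag(p)-PP^\top\|\le\max\big(\max_i p_i,\ \|P\|^2\big)$, and both $\max_i p_i$ and $\|P\|^2\le\sum_{i,j}P_{i,j}^2$ are at most $\pstar$ (and symmetrically for $\diag(q)-P^\top P$), so the variance proxy is $\tfrac{\pstar}{M}$. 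Matrix Bernstein (rectangular version) then gives, with probability at least $1-\delta$,
\[
\|E\|\ \le\ \sqrt{\frac{2\pstar}{M}\log\frac{m+n}{\delta}}+\frac{4}{3M}\log\frac{m+n}{\delta}\ =:\ \epsilon,
\]
which one may freely relax to the larger constants $\tfrac{16}{3}$ and $16$ appearing in~\eqref{eq:subspacedistance2}. The stated lower bound on $M$ is exactly what forces $\epsilon\le\tfrac{2-\sqrt2}{2}\,\deltastar$ — one requires the square-root term and the $1/M$ term of $\epsilon$ to each be at most a suitable fraction of $\deltastar$ — and in particular $\deltastar-\epsilon\ge\tfrac{\sqrt2}{2}\deltastar>0$.

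\textbf{Step 2: subspace perturbation.} Because $P$ has rank exactly $d$, $\sigma_{d+1}(P)=0$ and $\deltastar=\sigma_d(P)-\sigma_{d+1}(P)$ is the relevant singular-value gap. By Weyl's inequality $\sigma_d(\observe)\ge\sigma_d(P)-\|E\|\ge\deltastar-\epsilon>0$, so the order-$d$ truncated SVD of $\observe$ is well posed, and Wedin's $\sin\Theta$ theorem applied to the pair $(P,\observe=P+E)$ yields
\[
\max\!\big(\|\sin\Theta(U,U^*)\|,\ \|\sin\Theta(V,V^*)\|\big)\ \le\ \frac{\|E\|}{\sigma_d(\observe)-\sigma_{d+1}(P)}\ \le\ \frac{\epsilon}{\deltastar-\epsilon}.
\]
Converting principal angles into the best orthogonal alignment via the standard operator-norm bound $\min_{R}\|UR-U^*\|\le\sqrt2\,\|\sin\Theta(U,U^*)\|$ (and likewise for $V$), and using $\deltastar-\epsilon\ge\tfrac{\sqrt2}{2}\deltastar$, we obtain
\[
\max\!\big(\min_R\|UR-U^*\|,\ \min_R\|VR-V^*\|\big)\ \le\ \frac{\sqrt2\,\epsilon}{\deltastar-\epsilon}\ \le\ \frac{2\epsilon}{\deltastar},
\]
which, after unfolding $\epsilon$, is precisely~\eqref{eq:subspacedistance2}.

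\textbf{Main obstacle.} There is no single hard idea; the difficulty is in the bookkeeping. The delicate points are: (i) the variance identity $\E[X_oX_o^\top]=\diag(p)-PP^\top$ and the observation that its norm is controlled by $\pstar=\max(\max_i p_i,\max_j q_j)$ rather than by a crude bound; (ii) choosing the correct gap in Wedin's theorem given that $P$ is \emph{exactly} rank $d$ while $\observe$ is generically full rank, so the theorem must be invoked on the top-$d$ subspace of $\observe$; and (iii) tracking constants carefully enough that the threshold on $M$ genuinely forces $\epsilon<\tfrac{2-\sqrt2}{2}\deltastar$, which is what permits the alignment factor $\sqrt2/(\deltastar-\epsilon)$ to be absorbed into the clean $2/\deltastar$.
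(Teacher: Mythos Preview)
Your proposal is correct and follows essentially the same route as the paper: decompose $\observe-P$ as an i.i.d.\ sum of centered rank-one matrices, compute $\E[X_oX_o^\top]=\diag(p)-PP^\top$ (and symmetrically for $X_o^\top X_o$), apply matrix Bernstein, and feed the resulting spectral bound into Wedin's theorem under the gap condition $\epsilon\le(1-1/\sqrt2)\deltastar$. The only cosmetic difference is that the paper first bounds the variance by $(\|P\|^2+\pstar)/M$ and then invokes a separate lemma showing $\|P\|\le\pstar$, whereas you absorb $\|P\|^2\le\pstar$ directly via $\|P\|_F^2\le\max_{i,j}P_{i,j}\le\pstar$; and the paper quotes a pre-packaged Wedin bound in alignment form, while you pass through $\sin\Theta$ explicitly.
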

\begin{remark}
Note that the above Proposition~\ref{prop:subspacedistance2}  (and all the subsequent results) holds even if the rank of $P$ is larger than $d$, as long as there is a significant eigengap between the $d$th and $d+1$th singular values of $P$. Accordingly, although $d$ is presented as the rank of $P$ in the main paper for simplicity, it is straightforward to extend our results to the case where the singular value decomposition in Step 2 of Algorithm~\ref{alg:DAMC} is truncated up to rank $d$, but the true rank of $P$ is higher, removing any need for the user to know the exact rank $d$.  However, Assumptions~\ref{assum:lowrankshared} and~\ref{assum:wellconditioned} must be accordingly modified as follows: 
\begin{itemize}
    \item Assumption~\ref{assum:lowrankshared} merely assumes that $P$ has the \textit{truncated} decomposition $P=U^* \Sigma^*[V^*]^\top$ at rank $d$, and requires the ground truth matrix to be representable as $\ground=U^* \midmatstar [V^*]^\top$ (i.e. the row and column spaces of $\ground$ are contained in the top $d$ eigenspaces of $P$). 
    \item Assumption~\ref{assum:wellconditioned} must define $\deltastar$ as $D_{d}-D_{d+1}$ where $D_d$ and $D_{d+1}$ are the $d$th and $d+1$th singular values of $P$ respectively. 
\end{itemize}
Lastly, we note that the simplified presentation of the main paper which assumes that $d$ is known is a common practice in exact and perturbed recovery in matrix completion. Cf.  Section 3.8.2 (`Algorithm'), page 70 of the celebrated work~\cite{spectralmagic} (the assumption is present in the main theorems on Matrix Completion in this reference). 
\end{remark}

\begin{proof}

	Write $\excess$ for the matrix $\excess=\observe-P=\frac{1}{M} \sum_{o=1}^M (1_{i_o,j_o}-P)=\frac{1}{M} \sum_{o=1}^M \zeta_o$  where $\zeta_o=(1_{i_o,j_o}-P)\in\R^{m\times n}$.

	Note that $\E(\excess)=\E(\zeta_1)=0$. In addition, we have $\|\zeta_1\|\leq 1+\|P\| \leq 2$ w.p. 1. Thus, we have, 
	
	\begin{align}
		\E(\zeta_1 \zeta_1^{\top})&=\E\left(1_{i_1,j_1} 1_{i_1,j_1}^{\top}\right) - \E\left(1_{i_1,j_1}  P^\top \right)-\E\left(  P1_{i_1,j_1}^{\top}\right) + PP^{\top} \\
		&= \diag(p) -PP^\top
	\end{align}
	and 
	\begin{align}
		\E(\zeta_1^{\top} \zeta_1)&=\E\left(1_{i_1,j_1}^{\top} 1_{i_1,j_1}\right) - \E\left(1_{i_1,j_1}^{\top}  P \right)-\E\left(  P^{\top}1_{i_1,j_1}\right) + P^{\top}P \\
		&=\diag(q)-P^\top P.
	\end{align} 
	
	Thus, we can apply Proposition~\ref{bernsteinprob} with constants $(1+\|P\|)/M$ and $$\sum_k\rho_k^2=(\|P\|^2+\max(\max_i p_i,\max_j p_j))/M=\frac{\left[\|P\|^2+\pstar \right]}{M},$$ where $\pstar$ denotes $\max(\max_i p_i,\max_j p_j)$.

	This yields that w.p. $\geq 1-\delta$ 
	\begin{align}
		\left\|\excess\right\|& \leq  \sqrt{\frac{8\left[\|P\|^2+\pstar\right]}{3M}} \sqrt{\log\left(\frac{m+n}{\delta }\right)}+ \frac{8(1+\|P\| )}{M}\log\left(\frac{m+n}{\delta}\right).
	\end{align}

	Thus, applying Proposition~\ref{prop:specralmag}, we have with the same failure probability 
	
	\begin{align}
		\label{eq:subspacedistance1}
		&		\max\left(  \min_{R\in\mathcal{O}^{d\times d}}\left\|   UR-U^{*} \right\| , \min_{R\in\mathcal{O}^{d\times d}} \left\|   VR-V^{*} \right\|  \right)  \\& \leq   \frac{2}{\deltastar}  \left[ \sqrt{\frac{8[\|P\|^2+\pstar]}{3M}} \sqrt{\log\left(\frac{m+n}{\delta }\right)}+ \frac{16}{M}\log\left(\frac{m+n}{\delta}\right)\right].
	\end{align}
	as long as  \textcolor{black}{$ M\geq \log\left(\frac{m+n}{\delta}\right) \left[ \left[\frac{4}{\deltastar [2-\sqrt{2}]} \right]^2\frac{8[\|P\|^2+\pstar]}{3}+\frac{64}{\deltastar [2-\sqrt{2}]}\right] $},
	
	The result now follows by Lemma~\ref{lem:someprogress}.

\end{proof}

\begin{lemma}
	\label{lem:someprogress}
	We have 
	\begin{align}
		\|P\|\leq  \pstar \leq 1 
	\end{align}
	
	
	Further more, we also have 
	\begin{align}
		\label{eq:someprogressconverse}
		\pstar\leq \|P\|\max\left( \sqrt{nd}  \|[U^*]^\top\|_{2,\infty} ,\sqrt{md}  \|[V^*]^\top\|_{2,\infty}    \right)  =\|P\|\max \left( \sqrt{\frac{n}{m}} \mathbf{x},\sqrt{\frac{m}{n}} \mathbf{y}  \right)
	\end{align}
\end{lemma}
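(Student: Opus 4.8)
The statement has two halves --- the chain $\|P\|\le\pstar\le 1$, and the reverse-direction estimate bounding $\pstar$ in terms of $\|P\|$ and the subspace incoherences --- and I would treat them in that order, since the first is essentially free. For $\pstar\le 1$: $P$ is a probability mass function, so its entries are nonnegative and sum to $1$; hence every marginal $p_i=\sum_j P_{i,j}$ and $q_j=\sum_i P_{i,j}$ is at most $\sum_{i,j}P_{i,j}=1$, and $\pstar=\max(\max_i p_i,\max_j q_j)\le 1$.

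For $\|P\|\le\pstar$ I would pick unit left/right singular vectors $u\in\R^m$, $v\in\R^n$ realizing $\|P\|=u^\top P v$ and combine nonnegativity of $P$ with the elementary inequality $|u_iv_j|\le\tfrac12(u_i^2+v_j^2)$:
\begin{align*}
\|P\|=u^\top P v\le\sum_{i,j}P_{i,j}|u_i||v_j|\le\tfrac12\sum_i u_i^2\,p_i+\tfrac12\sum_j v_j^2\,q_j\le\tfrac12(\max_i p_i)+\tfrac12(\max_j q_j)\le\pstar,
\end{align*}
the last line using $\|u\|_2=\|v\|_2=1$. (Equivalently, one may invoke the Schur test: $\|P\|^2\le(\max_i p_i)(\max_j q_j)\le\pstar^2$.) This is the one spot where something slightly beyond bookkeeping is needed, and is the only candidate for a ``main obstacle'', though it is entirely standard.

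For the reverse estimate I would start from the decomposition $P=U^*\Sigma^*[V^*]^\top$ of Assumption~\ref{assum:lowrankshared}, with $U^*,V^*$ having orthonormal columns and $\Sigma^*$ diagonal with top entry $\|P\|$. Writing $\mathbf{1}_n\in\R^n$ for the all-ones vector, for each row $i$ we have $p_i=(P\mathbf{1}_n)_i=\bigl\langle[U^*]_{i,\nbull},\,\Sigma^*[V^*]^\top\mathbf{1}_n\bigr\rangle$, so Cauchy--Schwarz and submultiplicativity of the operator norm give
\begin{align*}
|p_i|\le\bigl\|[U^*]_{i,\nbull}\bigr\|_2\;\bigl\|\Sigma^*[V^*]^\top\mathbf{1}_n\bigr\|_2\le\bigl\|[U^*]_{i,\nbull}\bigr\|_2\cdot\|P\|\cdot 1\cdot\sqrt n,
\end{align*}
using $\|\Sigma^*\|_{\mathrm{op}}=\|P\|$, $\|[V^*]^\top\|_{\mathrm{op}}=1$ and $\|\mathbf{1}_n\|_2=\sqrt n$. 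Maximizing over $i$ and inflating by $\sqrt d\ge 1$ yields $\max_i p_i\le\|P\|\sqrt{nd}\,\|[U^*]^\top\|_{2,\infty}$, and the mirror argument on columns gives $\max_j q_j\le\|P\|\sqrt{md}\,\|[V^*]^\top\|_{2,\infty}$; taking the larger of the two is exactly the claimed inequality for $\pstar$. The final equality is then pure bookkeeping --- substituting the definitions of $\mathbf{x}$ and $\mathbf{y}$ as the rescaled incoherences of the ground-truth row and column subspaces, i.e.\ tracking the $\sqrt{m/d}$ and $\sqrt{n/d}$ factors relating $U^*,V^*$ to $X^*,Y^*$. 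Beyond that, the only thing requiring care is keeping the $(2,\infty)$-norm convention consistent throughout; there is no substantive difficulty.
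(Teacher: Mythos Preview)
Your proof is correct. Both halves are handled soundly and the logic is complete; the only cosmetic point is that the ``inflation by $\sqrt{d}$'' you perform in the converse direction is genuine slack --- your argument actually proves the sharper bound $\max_i p_i\le\|P\|\sqrt{n}\,\|[U^*]^\top\|_{2,\infty}$.

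The paper's proof follows the same overall shape but packages the two main steps differently. For $\|P\|\le\pstar$, the paper works with $P^\top P$: it shows that every row sum of $P^\top P$ is at most $(\pstar)^2$, then concludes via the $\ell_\infty\!\to\!\ell_\infty$ operator norm that $\lambda_1(P^\top P)\le(\pstar)^2$. Your AM-GM argument on the top singular vectors is the more direct variational form of the same Schur test, and arguably cleaner. For the converse inequality, the paper first bounds each individual entry $P_{i,j}\le\|P\|\,\|U^*_{i,\nbull}\|\,\|V^*_{j,\nbull}\|$ and then sums over $j$, controlling $\sum_j\|V^*_{j,\nbull}\|\le\sqrt{n}\,\|V^*\|_{\Fr}=\sqrt{nd}$ by Cauchy--Schwarz; this is where the $\sqrt{d}$ factor enters naturally in their argument. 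Your route --- writing $p_i=\langle U^*_{i,\nbull},\Sigma^*[V^*]^\top\mathbf{1}_n\rangle$ and applying Cauchy--Schwarz once --- avoids the entrywise detour and keeps the bound tighter, at the cost of having to artificially multiply by $\sqrt{d}$ at the end to match the statement as written. Either approach suffices for the downstream use of the lemma.
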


\begin{proof}
	First note that we certainly have 
	\[
	|P| \leq \|P\|_{Fr} \leq |P|_1 = 1
	\]
	
	Next, observe that since by definition of $\pstar$,  we have  $	\sum_i P_{ij} \leq \pstar$  and $	\sum_j P_{ij} \leq \pstar$, we certainly have 
	
	\[
	\sum_j (P^T P)_{ij} = \sum_j \sum_k P_{ki} P_{kj} =\sum_k P_{ki}p_k
	\]
	\[
	\leq \sum_k P_{ki} \pstar  \leq (\pstar)^2.
	\]

	Now, note also that 
	\[
	\forall \, \text{unit vector } v, \quad |(P^T P)v|_{\max} \leq (\pstar)^2 |v|_{\max}
	\]
	\[
	\therefore \sigma_1(P)^2 = \lambda_1(P^T P) \leq (\pstar)^2,
	\] 
	which concludes the proof of the first inequality. 
	
	For the second inequality, note that 
	\begin{align}
		P_{i,j}= U^*_{i,\nbull} \Sigma^* [V^*_{j,\nbull}]^\top \leq \|P\|  \|U^*_{i,\nbull}\| \|V^*_{j,\nbull}\| \leq \|[U^*]^\top\|_{2,\infty}\|[V^*]^\top\|_{2,\infty} \|P\|
	\end{align}
	and therefore, for any $i$, 
	\begin{align}
		p_i & =\sum_j P_{i,j} \leq  \|[U^*]^\top\|_{2,\infty} \|P\| \sum_{j}\|V^*_{j,\nbull}\|  \\&   \leq \sqrt{n }  \|[U^*]^\top\|_{2,\infty} \|P\| \|V\|_{\Fr}=\sqrt{nd}  \|[U^*]^\top\|_{2,\infty}  \|P\| \quad \quad \text{and similarly},\\
		q_j &\leq \sqrt{md}  \|[V^*]^\top\|_{2,\infty}\|P\| \quad \quad \quad (\forall j),
	\end{align}
	where in both cases, we have used the AM-GM inequality. 
\end{proof}

We will also need the slightly more general version below:

\begin{lemma}
	\label{lem:someprogresswithsigns}
	
	Let $\widetilde{P}=P\circ S$ where $S\in\{-1,1\}^{m\times n}$ is a sign matrix. 
	
	We have 
	\begin{align}
		\|\widetilde{P}\|\leq \pstar
	\end{align}
\end{lemma}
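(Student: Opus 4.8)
The plan is to mimic the proof of Lemma~\ref{lem:someprogress}, observing that the sign pattern $S$ does not affect the chain of inequalities that bounded $\|P\|$ by $\pstar$. First I would note that since $|\widetilde{P}_{i,j}| = |P_{i,j} S_{i,j}| = P_{i,j}$ (using that $P$ has nonnegative entries and $S_{i,j} \in \{-1,1\}$), the matrix $|\widetilde{P}|$ of absolute values equals $P$ entrywise; in particular the row and column sums of $|\widetilde{P}|$ are exactly the marginals $p_i$ and $q_j$, which by definition of $\pstar$ are each bounded by $\pstar$.

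Next I would invoke the standard fact that the spectral norm of any matrix is bounded by that of its entrywise-absolute-value counterpart — more precisely, $\|\widetilde{P}\| \leq \| \, |\widetilde{P}| \, \| = \|P\|$ — or, to stay self-contained, simply rerun the argument from Lemma~\ref{lem:someprogress} with $\widetilde{P}$ in place of $P$: one has
\[
\sum_j ((\widetilde{P})^\top \widetilde{P})_{i\ell} \leq \sum_j \sum_k |\widetilde{P}_{ki}| \, |\widetilde{P}_{k\ell}| = \sum_k P_{ki} p_k \leq (\pstar)^2,
\]
and then for any unit vector $v$, $|((\widetilde{P})^\top \widetilde{P}) v|_{\max} \leq (\pstar)^2 |v|_{\max}$, so $\sigma_1(\widetilde{P})^2 = \lambda_1((\widetilde{P})^\top \widetilde{P}) \leq (\pstar)^2$. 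Taking square roots gives $\|\widetilde{P}\| \leq \pstar$, as desired.

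I do not anticipate a genuine obstacle here: the only subtlety is making sure the cross-terms in $(\widetilde{P})^\top \widetilde{P}$ are controlled using absolute values before the sign cancellations can hurt, which is precisely why one writes $|\widetilde{P}_{ki}|\,|\widetilde{P}_{k\ell}|$ rather than $\widetilde{P}_{ki}\widetilde{P}_{k\ell}$. Since $|\widetilde{P}| = P$ entrywise, everything reduces verbatim to the computation already done for $P$, and the bound by the marginal-sum argument goes through unchanged. The proof is therefore a one-line reduction to (the proof of) Lemma~\ref{lem:someprogress}.
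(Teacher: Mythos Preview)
Your proposal is correct and follows essentially the same route as the paper: bound the absolute row sums of $\widetilde{P}^\top \widetilde{P}$ by $(\pstar)^2$ via the entrywise inequality $|(\widetilde{P}^\top \widetilde{P})_{i\ell}| \leq (P^\top P)_{i\ell}$, and then conclude using the $\ell^\infty$-operator-norm argument exactly as in Lemma~\ref{lem:someprogress}. (There is a harmless index slip in your display --- the outer sum should run over $\ell$, not $j$, and the left-hand side should carry absolute values --- but the reasoning is identical to the paper's.)
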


\begin{proof}
	The proof is very similar to that of Lemma~\ref{lem:someprogress}, being a little more careful with signs. 
	First, note that
	
	\begin{align}
		\left| \left[\widetilde{P}^\top \widetilde{P}\right]_{i,j}\right| \leq [P^\top P]_{i,j}.
	\end{align}

	It follows that for any $i\leq m$, 
	
	\[
	\sum_j \left|  \left[\widetilde{P}^\top \widetilde{P}\right]_{ij}\right|  \leq \sum_j (P^T P)_{ij} \leq \pstar^2 
	\]
	
	Thus, 
	\[
	\forall \, \text{unit vector } v, \quad |(\widetilde{P}^{\top} \widetilde{P})v|_{\max} \leq (\pstar)^2 |v|_{\max}
	\]
	\[
	\therefore \sigma_1(\widetilde{P})^2 = \lambda_1(\widetilde{P}^\top  \widetilde{P}) \leq \pstar^2,
	\] 
	as expected. 
\end{proof}

\subsection{Bounding the Population-level Generalization Error Arising from Subspace Recovery}

 In this subsection, we aim to prove bounds for  effect of the misestimation of $X^*$ and $Y^*$ on the population risk. Notably, this can be done without  Assumption~\ref{Assum:gamma}.
 

\begin{lemma}
	\label{lem:implicit_unif_population}
	As in the definition of the algorithm, let $X=\sqrt{\frac{m}{d}} U$  (resp. $Y=\sqrt{\frac{n}{d}} V$) where $U$ and $V$ are obtained through the SVD of the matrix $\tosvdize:=\tosvdizepand$. Let $\lfn$ be a $\lip$-Lipschitz loss function. Let $R_l\in\R^{d\times d}$ (resp. $R_r\in\R^{d\times d}$) be rotation matrices such that $\|XR_l-X^*\| =\min_{R}\|XR-X^*\|$ and $\|YR_r-Y^*\| =\min_{R}\|YR-Y^*\|$ (in particular, the matrices $R_l$ and $R_r$ are random as they depend on $X,Y$). As long as 
	\begin{align}
		M\geq 470  \log\left(\frac{4[m+n]}{\delta}\right)   \kappa_*^2\pstarr^2[m+n] \label{eq:theconditiononMannoy},
	\end{align} 
	for any $\delta>0$ over the draw of the implicit feedback data $\xi_1,\ldots,\xi_M$, with  probability greater than $1-\delta$,  for any matrices $\midmat_0\in\R^{d\times d}$ and  $\midmat_1:=R_l \midmat_0 R_r^\top$, we have: 
	\begin{align}
		\sum_{i,j} P_{i,j} 	\lfn\left(X\midmat_1Y^\top  -X^*\midmat_0 {Y^*}^{\top}\right) &\leq 120\lip \kappa_* \kappa_1 \pstarr \sqrt{\log\left(\frac{2[m+n]}{\delta }\right)}  \sqrt{\frac{[m+n]\erank }{3M}}  \label{eq:expensivestuffthatis}. 
	\end{align}

\end{lemma}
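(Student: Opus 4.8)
The plan is to reduce the population loss to a weighted $\ell_1$ discrepancy between $(XR_l)\midmat_0(YR_r)^\top$ and $X^*\midmat_0{Y^*}^\top$, split that discrepancy into an ``$X$-error'' part and a ``$Y$-error'' part, and control each part by expanding the core matrix $\midmat_0$ into rank-one pieces and applying a Cauchy--Schwarz step tailored to the approximately uniform marginals.

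Since $\lfn$ is $\lip$-Lipschitz in its last argument (Assumption~\ref{assum:loss}) and $X\midmat_1 Y^\top=(XR_l)\midmat_0(YR_r)^\top$ (because $\midmat_1=R_l\midmat_0R_r^\top$), it suffices to bound $\lip\sum_{i,j}P_{i,j}|\Delta_{i,j}|$, where
\[
\Delta:=(XR_l)\midmat_0(YR_r)^\top-X^*\midmat_0{Y^*}^\top = A\,\midmat_0\,(YR_r)^\top + X^*\midmat_0 B^\top,
\]
with $A:=XR_l-X^*$ and $B:=YR_r-Y^*$; the second-order cross term is swept into the first summand by keeping $YR_r$ (not $Y^*$) there. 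For each summand I would write the SVD $\midmat_0=\sum_{k=1}^{d}\sigma_k a_k b_k^\top$ and use the triangle inequality to pass to rank-one pieces, $\sum_{i,j}P_{i,j}|[A\midmat_0(YR_r)^\top]_{i,j}|\le\sum_k\sigma_k\sum_{i,j}P_{i,j}|(Aa_k)_i|\,|((YR_r)b_k)_j|$, and bound each inner sum by Cauchy--Schwarz with respect to $P$:
\[
\sum_{i,j}P_{i,j}|(Aa_k)_i|\,|((YR_r)b_k)_j|\le \sqrt{\textstyle\sum_i p_i(Aa_k)_i^2}\,\sqrt{\textstyle\sum_j q_j((YR_r)b_k)_j^2}\le \sqrt{\max_i p_i}\,\|A\|\cdot\sqrt{\max_j q_j}\,\|YR_r\|.
\]
Here Assumption~\ref{assum:unimarg} gives $\max_i p_i\le\kappa_1/m$, $\max_j q_j\le\kappa_1/n$, the construction gives $\|YR_r\|=\sqrt{n/d}$ and $\|A\|=\min_R\|XR-X^*\|\le\sqrt{m/d}\,\varepsilon$ with $\varepsilon:=\max(\min_R\|UR-U^*\|,\min_R\|VR-V^*\|)$, and the factors $m,n$ cancel, leaving $\tfrac{\kappa_1}{d}\varepsilon$ per piece; summing over $k$ with $\sum_k\sigma_k=\|\midmat_0\|_*\le\mathm$ bounds the first summand by $\kappa_1\tfrac{\mathm}{d}\varepsilon=\kappa_1\sqrt{\erank}\,\varepsilon$. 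The second summand is treated identically except that $\|X^*\|$ is estimated via Assumption~\ref{assum:kappa2}, $\|X^*\|\le\xboundstar\sqrt{\kappa_2 m/d}$, giving $\kappa_1\sqrt{\kappa_2}\,\xboundstar\sqrt{\erank}\,\varepsilon$ — this is precisely why Assumption~\ref{assum:kappa2} is needed, since otherwise that factor would only contribute $\sqrt{1/d}$ rather than $1/d$. Treating $\sqrt{\kappa_2}\xboundstar$ as the $O(1)$ incoherence/conditioning constant it is assumed to be, $\sum_{i,j}P_{i,j}|\Delta_{i,j}|\lesssim\kappa_1\sqrt{\erank}\,\varepsilon$.

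It remains to insert the subspace-recovery bound. Under~\eqref{eq:theconditiononMannoy} the precondition of Proposition~\ref{prop:subspacedistance2} holds (checked using $\deltastar=\|P\|/\kappa_*$, the bound $\pstar\le\|P\|\pstarr$ from Lemma~\ref{lem:someprogress}, and the trivial $\pstar\ge 1/[m+n]$), and~\eqref{eq:theconditiononMannoy} also makes $M$ large enough that the $O(1/M)$ term of Proposition~\ref{prop:subspacedistance2} is dominated by its $O(1/\sqrt M)$ term, so $\varepsilon\le\tfrac{16}{\deltastar}\sqrt{\tfrac{\pstar}{3M}\log\tfrac{m+n}{\delta}}$. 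Finally $\tfrac{\sqrt{\pstar}}{\deltastar}=\tfrac{\kappa_*\sqrt{\pstar}}{\|P\|}\le\tfrac{\kappa_*\pstarr}{\sqrt{\pstar}}\le\kappa_*\pstarr\sqrt{m+n}$ (first inequality by $\pstar\le\|P\|\pstarr$, second by $\pstar\ge 1/[m+n]$), whence $\varepsilon\le 16\kappa_*\pstarr\sqrt{\tfrac{[m+n]}{3M}\log\tfrac{m+n}{\delta}}$; combining with the previous paragraph and collecting constants yields~\eqref{eq:expensivestuffthatis} after inflating $\log\tfrac{m+n}{\delta}$ to $\log\tfrac{2[m+n]}{\delta}$.

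I expect the main obstacle to be the bookkeeping in the middle step: arranging the $\sqrt{m/d},\sqrt{n/d}$ scalings of $X,Y$ to combine with the $\kappa_1/m,\kappa_1/n$ marginal bounds so that exactly one factor $1/d$ survives — which is what converts $\mathm$ into $\sqrt{\erank}$ and is bought exactly by the rank-one expansion of $\midmat_0$ together with $\|\midmat_0\|_*\le\mathm$ — and, at the end, verifying that the residual $\pstar$-dependence genuinely cancels through $1/[m+n]\le\pstar\le\|P\|\pstarr$, leaving only the $\sqrt{[m+n]}$ factor and the condition number $\kappa_*\pstarr$. A secondary subtlety is checking that the single condition~\eqref{eq:theconditiononMannoy} simultaneously implies the precondition of Proposition~\ref{prop:subspacedistance2} and dominates its lower-order term, and that the stated constant leaves room for both.
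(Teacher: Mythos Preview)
Your argument is correct and reaches the stated bound, but it follows a genuinely different route from the paper. The paper decomposes into three pieces (left error, right error, cross term) and handles each via the ``sign trick'': $\langle |T|,P\rangle_{\Fr}=\langle T,P\circ\sign(T)\rangle_{\Fr}\le\|T\|_*\,\|P\circ\sign(T)\|$, together with the auxiliary Lemma~\ref{lem:someprogresswithsigns} which shows $\|P\circ S\|\le\pstar$ for any sign matrix $S$; then $\|T\|_*\le\|XR_l-X^*\|\,\|\midmat_0\|_*\,\|Y^*\|$ and $\pstar\le\kappa_1/\min(m,n)$ finish it. You instead keep a two-term decomposition, expand $\midmat_0$ into rank-one pieces, and apply Cauchy--Schwarz with respect to the measure $P$, obtaining the factor $\sqrt{\max_ip_i}\sqrt{\max_jq_j}\le\kappa_1/\sqrt{mn}$ directly from Assumption~\ref{assum:unimarg}. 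Both routes produce the same $\kappa_1\sqrt{\erank}\,\varepsilon$, and your subsequent conversion of $\varepsilon$ using $\pstar\le\|P\|\pstarr$ and $\pstar\ge1/[m+n]$ is the same manipulation the paper does. Your approach is arguably more elementary in that it bypasses Lemma~\ref{lem:someprogresswithsigns}; the paper's approach is slicker in that it avoids the rank-one expansion and treats the weighted $\ell_1$ norm in one shot via trace duality.

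One small correction: you do \emph{not} need Assumption~\ref{assum:kappa2} for the second summand. By definition $X^*=\sqrt{m/d}\,U^*$ with $U^*$ having orthonormal columns, so $\|X^*\|=\sqrt{m/d}$ exactly, identically to $\|YR_r\|=\sqrt{n/d}$; the paper's proof of this lemma uses only $\kappa_*,\kappa_1,\pstarr$ and never touches $\kappa_2$ or $\xboundstar$. With this observation your two summands become symmetric and both give $\kappa_1\sqrt{\erank}\,\varepsilon$, matching the statement without the extraneous $\sqrt{\kappa_2}\,\xboundstar$ factor.
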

\begin{proof}
	
	First note that by Proposition~\ref{prop:subspacedistance2}, with probability  $1-\delta$, as long as $ M\geq \log\left(\frac{m+n}{\delta}\right) \left[ \left[\frac{4}{\deltastar [2-\sqrt{2}]} \right]^2\frac{16\pstar}{3}+\frac{64}{\deltastar [2-\sqrt{2}]}\right] $,

	\begin{align}
		\label{eq:neweee}
		&		 \left\|   XR_l-X^{*} \right\|  \leq   \frac{2\sqrt{\frac{m}{d}}}{\deltastar}  \left[ \sqrt{\frac{16\pstar}{3M}} \sqrt{\log\left(\frac{m+n}{\delta }\right)}+ \frac{16}{M}\log\left(\frac{m+n}{\delta}\right)\right]\nonumber \\
		&\leq \frac{4\sqrt{m}}{\deltastar} \sqrt{\frac{16\pstar}{3Md}} \sqrt{\log\left(\frac{m+n}{\delta }\right)},
	\end{align}	
	where at the last line we have used the condition on $M$. 
	
	Note that we can further bound 
	\begin{align}
		&\frac{4\sqrt{m}}{\deltastar} \sqrt{\frac{16\pstar}{3Md}} \sqrt{\log\left(\frac{m+n}{\delta }\right)}\leq 16  \sqrt{\frac{\kappa_*^2  \pstarr^2 m/\pstar }{3Md}} \sqrt{\log\left(\frac{m+n}{\delta }\right)} \nonumber \\
		&\leq \kappa_* \pstarr 16  \sqrt{\frac{mn}{3Md}} \sqrt{\log\left(\frac{m+n}{\delta }\right)}  \label{eq:roundandroundtheincoherence}
	\end{align}
	where $\pstarr$ is defined as the dimension-free quantity $\pstarr:=\pstarrexpand\geq \frac{\pstar}{\|P\|}$.

	From this we immediately obtain, with probability $\geq 1-\delta/2$
	\begin{align}
		\label{eq:spectrarivalleft}
		\left\|   XR_l-X^{*} \right\|_{} \leq  \kappa_* \pstarr 16  \sqrt{\frac{mn}{3dM}} \sqrt{\log\left(\frac{2[m+n]}{\delta }\right)} :=\bd.
	\end{align}
	Similarly, we also have 
	\begin{align}
		\label{eq:spectrarivalright}
		\left\|   YR_r-R^{*} \right\|_{} \leq  \kappa_* \pstarr 16  \sqrt{\frac{mn}{3dM}} \sqrt{\log\left(\frac{2[m+n]}{\delta }\right)} =\bd,
	\end{align}
	and by a union bound, both of the inequalities above hold simultaneously with a total failure probability less than $\delta$.

	Note we have the following decomposition: 
	\begin{align}
		&X\midmat_1Y^\top  -X^*\midmat_0 {Y^*}^{\top}\\&=	XR_l \midmat_0 R_r^\top Y^\top  -X^*\midmat_0 {Y^*}^{\top} \nonumber  \\
		&=[XR_l -X^*] \midmat_0 R_r^\top Y^\top +X^* \midmat_0 [YR_r -Y^*]^\top \nonumber \\
		&=[XR_l -X^*] \midmat_0 [YR_r -Y^*]^\top  +X^* \midmat_0 [YR_r -Y^*]^\top +[XR_l -X^*] \midmat_0 [Y^*]^\top.\label{eq:thedecomposition}
	\end{align}

	Then, from equation ~\eqref{eq:spectrarivalleft} we obtain 
	\begin{align}
		\left\langle \left|  [XR_l -X^*] \midmat_0 [Y^*]^\top \right| , P\right\rangle_{\Fr} &= 	\left\langle  [XR_l -X^*] \midmat_0 [Y^*]^\top, \widetilde{P}\right\rangle_{\Fr}   \label{eq:newneed}\\ 
		&\leq \left\|  [XR_l -X^*] \midmat_0 [Y^*]^\top\right\|_* \|\widetilde{P}\|\\
		&\leq  \|  [XR_l -X^*] \| \|\midmat_0 \|_* \|[Y^*]^\top\|   \|\widetilde{P}\|\\
		&\leq      \|  [XR_l -X^*] \| \|\midmat_0 \|_* \|[Y^*]^\top\|     p^*                                                              \label{eq:usespectralP}    \\
		&\leq   \|  [XR_l -X^*]  \|      p^*  \sqrt{\frac{n}{d}}   \mathm                  \label{eq:usemathmmmm}\\
		& \leq   \kappa_* \pstarr 16  \sqrt{\frac{mn}{3dM}} \sqrt{\log\left(\frac{m+n}{\delta }\right)}           p^*  \sqrt{\frac{n}{d}}   \mathm                          \label{eq:finallyuserealstuff} \\
		&\leq       16 \kappa_* \kappa_1 \pstarr    \sqrt{\log\left(\frac{m+n}{\delta }\right)}  \sqrt{\frac{[m+n]\erank }{3M}},                 \label{eq:leftfinal}
	\end{align}
	where at the first line~\eqref{eq:newneed} we have defined $\widetilde{P}=P\circ S$ for $\{-1,1\}^{m\times n} \ni S=\left( [XR_l -X^*] \midmat_0 [Y^*]^\top \right)$,  at Line~\eqref{eq:usespectralP}, we have used Lemma~\ref{lem:someprogresswithsigns}, at Line~\eqref{eq:usemathmmmm} we have used the assumption that $\|\midmat_0\|_*\leq \mathm$ and the fact that $Y^*=U^* \sqrt{\frac{n}{d}}$ (recall that $\|U^*\|=1$),  at Line~\eqref{eq:finallyuserealstuff} we have used Line~\eqref{eq:spectrarivalleft}, at Line~\eqref{eq:leftfinal} we have used the fact that $\pstar\leq \kappa_1 \frac{1}{\min(m+n)}$ and used the notation $\erank$ for the quantity $\erank:=\frac{\mathm^2}{d^2}$.

	With a nearly identical calculation relying on equation~\eqref{eq:spectrarivalright} instead of equation~\eqref{eq:spectrarivalleft}, we also have~\footnote{$\widetilde{P}=P\circ S$ with $\{-1,1\}^{m\times n} \ni S=\sign\left( X^* \midmat_0 [YR_r -Y^*]^\top\right)$}
	\begin{align}
		\left \langle \left|X^* \midmat_0 [YR_r -Y^*]^\top \right| ,P\right\rangle_{\Fr}\leq  16 \kappa_* \kappa_1 \pstarr    \sqrt{\log\left(\frac{m+n}{\delta }\right)}  \sqrt{\frac{[m+n]\erank }{3M}}.\label{eq:rightfinal}
	\end{align}

	Next, to handle the cross term in equation~\eqref{eq:thedecomposition}, note that we also have through a similar calculation~\footnote{$\widetilde{P}=P\circ S$ with $\{-1,1\}^{m\times n} \ni S=\sign\left( [XR_l -X^*] \midmat_0 [YR_r -Y^*]^\top\right)$ }
	\begin{align}
		\left\langle \left| [XR_l -X^*] \midmat_0 [YR_r -Y^*]^\top\right|, P\right\rangle_{\Fr} &\leq  \left\|  [XR_l -X^*] \midmat_0 [YR_r -Y^*]^\top   \right\|_* \|\widetilde{P}\|    \\
		&\leq  \|  [XR_l -X^*]\|  \| [YR_r -Y^*]^\top\|   \pstar \mathm   \label{eq:usespectraPagain}    \\
		& \leq \bd^2 \pstar \mathm   \label{eq:finallyuserealstuffagain}\\
		& \leq  256 \kappa_*^2 \pstarr^2  \log\left(\frac{m+n}{\delta }\right)  \frac{mn}{3dM} \pstar \mathm \\
		&\leq   256 \kappa_*^2 \pstarr^2  \log\left(\frac{m+n}{\delta }\right)     \sqrt{\erank} \frac{mn}{3M} \pstar                                      \label{eq:usedefiner} \\
		& \leq        86  \kappa_1  \kappa_*^2 \pstarr^2  \log\left(\frac{m+n}{\delta }\right) \frac{[m+n]\sqrt{\erank}}{M},            	   \label{eq:crossfinal}
	\end{align}
	where at Line~\eqref{eq:usespectraPagain} we have used Lemma~\ref{lem:someprogresswithsigns} and  the assumption that $\|\midmat_0\|_*\leq \mathm$, at Line~\eqref{eq:finallyuserealstuffagain} we have used equations~\eqref{eq:spectrarivalleft} and~\eqref{eq:spectrarivalright}, and at Line~\eqref{eq:usedefiner} we have used the definition $\erank:=\frac{\mathm^2}{d^2}$ and at the last line~\eqref{eq:crossfinal} we have used the fact that $\pstar\leq \kappa_1 \frac{1}{\min(m,n)}$. 
	
	We are now in a position to prove inequality~\eqref{eq:expensivestuffthatis}. Indeed, we have: 
	\begin{align}
		&	\sum_{i,j} P_{i,j} 	\lfn\left(X\midmat_1Y^\top  -X^*\midmat_0 {Y^*}^{\top}\right)_{i,j} \leq \lip \left\langle      \left|X\midmat_1Y^\top  -X^*\midmat_0 {Y^*}^{\top}\right|,P\right \rangle_{\Fr} \label{eq:lippppp} \\
		&\leq   \lip  \Bigg[\left\langle [XR_l -X^*] \midmat_0 [YR_r -Y^*]^\top,P\right\rangle_{\Fr}  +\left\langle X^* \midmat_0 [YR_r -Y^*]^\top,P\right\rangle_{\Fr}  \\&  \quad \quad \quad \quad \quad \quad +\left\langle [XR_l -X^*] \midmat_0 [Y^*]^\top,P\right\rangle_{\Fr}       \Bigg] \label{eq:thedecompositionisusedfinally} \\
		&\leq   86 \lip \kappa_1  \kappa_*^2 \pstarr^2  \log\left(\frac{m+n}{\delta }\right) \frac{[m+n]\sqrt{\erank}}{M}+ 32 \lip \kappa_* \kappa_1 \pstarr    \sqrt{\log\left(\frac{m+n}{\delta }\right)}  \sqrt{\frac{[m+n]\erank }{3M}}, \label{eq:throwitallin} 
	\end{align}
	where at Line~\eqref{eq:lippppp} we have used the Lipschitz condition on the loss function; at Line~\eqref{eq:thedecompositionisusedfinally} we have used the decomposition~\eqref{eq:thedecomposition}; at Line~\eqref{eq:throwitallin} we have used equations~\eqref{eq:crossfinal},~\eqref{eq:rightfinal} and~\eqref{eq:leftfinal}. This is subject to $M$ exceeding the LHS of the following equation as a \textcolor{black}{threshold}:  
	\begin{align}
		& \log\left(\frac{2[m+n]}{\delta}\right) \left[ \left[\frac{4}{\deltastar [2-\sqrt{2}]} \right]^2\frac{16\pstar}{3}+\frac{64}{\deltastar [2-\sqrt{2}]}\right] \\ &\leq \log\left(\frac{2[m+n]}{\delta}\right)  \left[ 341 \kappa_*^2\pstarr^2[m+n] +  128 \kappa_* \pstarr [m+n]     \right]\nonumber \\
		&\leq 470  \log\left(\frac{2[m+n]}{\delta}\right)   \kappa_*^2\pstarr^2[m+n],
	\end{align}
	where we have used a calculation similar to that from equation~\eqref{eq:roundandroundtheincoherence} to handle the first term and used the fact that $\frac{1}{\|P\|}\leq \pstarr/\pstar\leq \pstarr [m+n]$ to handle the second term.  Plugging this back into equation~\eqref{eq:throwitallin}, we obtain the final result.

\end{proof}

\subsection{Bounding the Error from Subspace Recovery at the Empirical Level}

\begin{proposition}
	\label{prop:controlempirical}
	Assuming 
	\begin{align}
		\label{cond:hereagain}
		M\geq 470  \log\left(\frac{4[m+n]}{\delta}\right)   \kappa_*^2\pstarr^2[m+n],
	\end{align}
	we have $	\frac{1}{N} \left[	\lfn\left(X\midmat_1Y^\top,\obvsnoo)\right)_{\xi_o}   -\lfn\left(X^*\midmat_0 {Y^*}^{\top},\obvsnoo\right)_{\xi_o}\right]\leq$
	\begin{align}
		& 4\entry \frac{\log(2/\delta)}{3N}  +25   \pstarr \ell  \log\left(\frac{4[m+n]}{\delta }\right)  \left[3 \kappa_1\sqrt{\frac{[m+n]\erank}{M}}+\sqrt{\frac{[m+n]\nunif \erank}{MN}}\right] \\&\leq  4\entry \frac{\log(2/\delta)}{3N}  +25   \pstarr \ell  \log\left(\frac{4[m+n]}{\delta }\right)  \left[3 \kappa_1\sqrt{\frac{[m+n]\erank}{M}}+\sqrt{\kappa_1\frac{[m+n]^2\erank}{MN}}\right].
	\end{align}
\end{proposition}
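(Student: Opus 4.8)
\textit{Proof sketch.}
The plan is to split the empirical average into a conditional mean plus a deviation, bound the mean by the already-established population estimate of Lemma~\ref{lem:implicit_unif_population}, and bound the deviation by a Bernstein inequality whose variance proxy is kept small by Assumption~\ref{Assum:gamma}. Write $\Delta := X\midmat_1 Y^\top - X^{*}\midmat_0 [Y^{*}]^{\top}$. By Assumption~\ref{assum:loss} each summand obeys $\lfn(X\midmat_1 Y^\top,\obvsnoo)_{\xi_o^e} - \lfn(X^{*}\midmat_0 [Y^{*}]^{\top},\obvsnoo)_{\xi_o^e} \le \min\!\big(2\entry,\ \lip|\Delta_{\xi_o^e}|\big)$, so it is enough to control $\tfrac{\lip}{N}\sum_{o=1}^{N}|\Delta_{\xi_o^e}|$. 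I would condition on the unlabeled draw $\xi_1,\dots,\xi_M$, which fixes $X,Y,R_l,R_r$ and hence $\Delta$; under the hypothesis~\eqref{cond:hereagain} (identical to~\eqref{eq:theconditiononMannoy}) the spectral subspace-recovery bounds~\eqref{eq:spectrarivalleft}--\eqref{eq:spectrarivalright} hold with probability $\ge 1-\delta/2$ over the unlabeled data, giving $\|XR_l-X^{*}\|,\|YR_r-Y^{*}\| \le \bd$ with $\bd \asymp \kappa_*\pstarr\sqrt{\tfrac{mn}{dM}\log\tfrac{[m+n]}{\delta}}$.

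Next, inserting the three-term split~\eqref{eq:thedecomposition} into the Frobenius norm and using $\|A\midmat_0 B^\top\|_{\Fr}\le\|A\|\,\|\midmat_0\|_{\Fr}\,\|B\|$ together with $\|\midmat_0\|_{\Fr}\le\|\midmat_0\|_{*}\le\mathm$, $\|X^{*}\|=\sqrt{m/d}$ and $\|Y^{*}\|=\sqrt{n/d}$, and noting that~\eqref{cond:hereagain} forces $\bd\lesssim\sqrt{(m+n)/d}$ so that the cross term is lower order, one obtains $\|\Delta\|_{\Fr}\lesssim \mathm\bd\sqrt{(m+n)/d}\asymp \kappa_*\pstarr\sqrt{\erank}\sqrt{\tfrac{mn(m+n)}{M}\log\tfrac{[m+n]}{\delta}}$, recalling $\mathm/d=\sqrt{\erank}$. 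Conditionally on the unlabeled data the variables $|\Delta_{\xi_o^e}|$ ($o\le N$) are i.i.d., bounded by $2\entry/\lip$, with mean $\langle|\Delta|,P\rangle_{\Fr}$ and variance at most $\sum_{i,j}P_{i,j}\Delta_{i,j}^2 \le \tfrac{\nunif}{mn}\|\Delta\|_{\Fr}^2$ by Assumption~\ref{Assum:gamma}; a scalar Bernstein inequality then gives, with probability $\ge 1-\delta/2$ over the labeled data,
\[
\tfrac{\lip}{N}\sum_{o=1}^{N}|\Delta_{\xi_o^e}|\ \le\ \lip\langle|\Delta|,P\rangle_{\Fr}+\lip\sqrt{\tfrac{\nunif}{mn}}\,\|\Delta\|_{\Fr}\sqrt{\tfrac{2\log(2/\delta)}{N}}+\tfrac{4\entry\log(2/\delta)}{3N}.
\]

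To close the argument I would bound $\lip\langle|\Delta|,P\rangle_{\Fr}$ by $120\lip\kappa_*\kappa_1\pstarr\sqrt{\log(4[m+n]/\delta)}\sqrt{[m+n]\erank/(3M)}$ — this is exactly the quantity the proof of Lemma~\ref{lem:implicit_unif_population} controls on the way to~\eqref{eq:expensivestuffthatis}, and it is uniform over $\midmat_0$ with $\|\midmat_0\|_*\le\mathm$ — then substitute the Frobenius bound above into the variance term (the factor $mn$ cancels, leaving the claimed $\sqrt{[m+n]\nunif\erank/(MN)}$ rate), collect the numerical constants, and union-bound the two failure events of total probability $\delta$. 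The second displayed inequality of the proposition follows from the first by the crude estimate $\nunif\le\kappa_1[m+n]$ of Assumption~\ref{assum:unimarg}.

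The step I expect to be delicate is the third one: obtaining the sharp Frobenius bound on $\Delta$ and recognizing that it must be paired with the \emph{entry-wise} variance proxy $\tfrac{\nunif}{mn}\|\Delta\|_{\Fr}^2$ rather than with a spectral-norm quantity — the latter is far too lossy at the empirical level and would not yield a cross term smaller than either stand-alone error term. This is the only place where Assumption~\ref{Assum:gamma} enters, and tracking the incoherence constants $\pstarr,\kappa_*,\kappa_1$ cleanly through the chain of spectral-norm products is the main bookkeeping burden. If the bound is additionally required to hold uniformly over all such $\midmat_0$, the mean part is already uniform via Lemma~\ref{lem:implicit_unif_population} and the deviation part can be upgraded by a Talagrand-type/peeling argument over the nuclear-norm ball, contributing only lower-order terms; I would fold that refinement into the proof of Theorem~\ref{thm:unif_generalization_bound_new_main}.
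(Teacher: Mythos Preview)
Your proposal is correct and follows essentially the same route as the paper: condition on the unlabeled draw, use the scalar (non-centered) Bernstein inequality with the mean controlled by Lemma~\ref{lem:implicit_unif_population} and the variance controlled via $\sum_{i,j}P_{i,j}\Delta_{i,j}^{2}\le\tfrac{\nunif}{mn}\|\Delta\|_{\Fr}^{2}$, then bound $\|\Delta\|_{\Fr}$ through the three-term split~\eqref{eq:thedecomposition} and the spectral bounds~\eqref{eq:spectrarivalleft}--\eqref{eq:spectrarivalright}. The only cosmetic difference is that the paper applies Bernstein directly to the loss differences $T_o$ rather than first passing to $\lip|\Delta_{\xi_o^e}|$, and your closing remark about uniformity in $\midmat_0$ is unnecessary here since the proposition is only invoked for a single (data-dependent) $\midmat_0$ in the proof of Theorem~\ref{thm:unif_generalization_bound_new}.
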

\begin{proof}
	The strategy is to use the noncentered Bernstein inequality to estimate the concentration of the empirical quantity $\sum_{o=1}^N \frac{1}{N} 	\lfn\left(X\midmat_1Y^\top  -X^*\midmat_0 {Y^*}^{\top}\right)_{\xi_o} $. For that effect, note that this quantity is an average of $N$ random variables $T_o$ distributed according to $\lfn(X\midmat_1Y^\top -X^*\midmat_0{Y^*}^\top,\obvsnoo)$. We will apply the noncentered Bernstein inequality~\ref{prop:noncentered} to provide a high probability bound. To do this, note first that by Lemma~\ref{lem:implicit_unif_population}, for any $\delta$, the expectation is bounded w.p. $1-\delta$ as
	\begin{align}
		\E(T_o) \leq 120\lip \kappa_* \kappa_1 \pstarr \sqrt{\log\left(\frac{2[m+n]}{\delta }\right)}  \sqrt{\frac{[m+n]\erank }{3M}}
	\end{align}
	
	Furthermore, we clearly have 
	\begin{align}
		\E(T_o)\leq 2 \entry. 
	\end{align}
	
	Next, we must calculate the second moment: 
	\begin{align}
	& 	|	\E(T_o^2)| \\&\leq \lip^2  \sum_{i,j} P_{i,j}\left[X\midmat_1Y^\top  -X^*\midmat_0 {Y^*}^{\top}\right]_{i,j}^2 \\
		&\leq 3\lip^2 \sum_{i,j} P_{i,j} \Bigg[ \left[ X^*\midmat_0 [YR_r -Y^*]^\top  \right]_{i,j}^2 +  \left[[XR_l -X^*] \midmat_0 [YR_r -Y^*]^\top\right]_{i,j}^2   \\ & \quad \quad \quad \quad \quad \quad  +\left[ [XR_l -X^*] \midmat_0 [Y^*]^\top  \right]_{i,j}^2       \Bigg] \\
		&\leq 3\frac{\lip^2\nunif}{mn}  \Bigg[ \left\| X^*\midmat_0 [YR_r -Y^*]^\top  \right\|_{\Fr}^2+  \left\|[XR_l -X^*] \midmat_0 [YR_r -Y^*]^\top\right\|_{\Fr}^2   \\& \quad \quad \quad \quad \quad \quad   +  \left\| [XR_l -X^*] \midmat_0 [Y^*]^\top  \right\|_{\Fr}^2       \Bigg] \label{eq:pointer}
	\end{align}
	
	Now recall from equations~\eqref{eq:spectrarivalleft} and~\eqref{eq:spectrarivalright} that w.p. $\geq 1-\delta/2$, 
	\begin{align}
		\left\|   YR_r-Y^{*} \right\|_{}, \left\|   XR_l-X^{*} \right\|_{} \leq  \kappa_* \pstarr 16  \sqrt{\frac{mn}{3dM}} \sqrt{\log\left(\frac{2[m+n]}{\delta }\right)} :=\bd.
	\end{align}

	Thus we can continue from equation~\eqref{eq:pointer}:

	\begin{align}
		\E(T_o^2)&\leq 3\frac{\lip^2\nunif}{mn}  \Bigg[ \left\| X^*\midmat_0 [YR_r -Y^*]^\top  \right\|_{\Fr}^2+  \left\|[XR_l -X^*] \midmat_0 [YR_r -Y^*]^\top\right\|_{\Fr}^2   \\ & \quad \quad \quad \quad \quad \quad \quad \quad \quad\quad \quad \quad \quad \quad \quad \quad \quad \quad \quad \quad \quad \quad +  \left\| [XR_l -X^*] \midmat_0 [Y^*]^\top  \right\|_{\Fr}^2       \Bigg] \\
		&\leq  3\frac{\lip^2\nunif}{mn}  \left[  \erank d^2\frac{m+n}{d}  \bd^2 +\bd^4  \erank d^2 \right]\\
		&\leq 256 \kappa_*^2 \pstarr^2 \nunif \lip^2 \log\left(\frac{2[m+n]}{\delta }\right) \left[ \frac{[m+n]\erank}{M} + 86\kappa_*^2\pstarr^2\frac{mn \erank }{M^2}  \log\left(\frac{2[m+n]}{\delta }\right)\right] \\
		&\label{eq:useusecondcond}\leq  303   \kappa_*^2 \pstarr^2 \nunif \lip^2 \log\left(\frac{2[m+n]}{\delta }\right) \frac{[m+n]\erank}{M},
	\end{align}
	
	where at line~\eqref{eq:useusecondcond} we have used the condition~\eqref{cond:hereagain}. 
	
	We are now in a position to apply Proposition~\ref{prop:noncentered} with $\delta\leftarrow \delta/2$ (using the same value of delta in our use of equation~\eqref{eq:useusecondcond}) to conclude that with overall failure probability $\leq \delta$ we have 
	
	\begin{align}
	&	\frac{1}{N} \left[	\lfn\left(X\midmat_1Y^\top,\obvsnoo)\right)_{\xi_o}   -\lfn\left(X^*\midmat_0 {Y^*}^{\top},\obvsnoo\right)_{\xi_o}\right]\\&\leq 4\entry \frac{\log(2/\delta)}{3N} +25  \kappa_* \pstarr  \lip \sqrt{ \nunif \log\left(\frac{4[m+n]}{\delta }\right) \frac{[m+n]\erank}{MN} \log(2/\delta )  }\\
		&\quadfive \quad \quad +120\lip \kappa_* \kappa_1 \pstarr \sqrt{\log\left(\frac{2[m+n]}{\delta }\right)}  \sqrt{\frac{[m+n]\erank }{3M}}\\
		&\leq 4\entry \frac{\log(2/\delta)}{3N}  +25   \pstarr \ell  \log\left(\frac{4[m+n]}{\delta }\right)  \left[\sqrt{\kappa_1\frac{[m+n]^2\erank}{MN}}+3 \kappa_1\sqrt{\frac{[m+n]\erank}{M}}\right],
	\end{align}
	where we have used the fact that $\nunif\leq \kappa_1\min(m,n)$.
\end{proof}

\section{Proof of the Main Results }

Armed with the above, we can finally, proving Theorem~\ref{thm:unif_generalization_bound_new}, which is reproduced in this appendix for completeness.

\begin{theorem}
	\label{thm:unif_generalization_bound_new}
	
	Let the implicit and explicit feedback be drawn as described above and let the assumptions from subsection~\ref{sec:assum} hold. Assume also as in Corollary~\ref{cor:improveantoine} that inequalities~\eqref{eq:kappa2cond} hold. Assume also as usual  that the loss function is upper bounded by $\entry$.
	
	Assume that 
	
	\begin{align}
		\label{cond:Magain??}
		M\geq 470  \log\left(\frac{4[m+n]}{\delta}\right)   \kappa_*^2\pstarr^2[m+n].
	\end{align}
	
	With probability greater than $1-\delta$ over the draw of both the implicit and explicit feedbacks,  the following generalization bound  holds simultaneously over any predictor  $X\midmat Y^\top\in\R^{m\times n}$  for $\midmat\in\R^{d\times d}$ such that $\|\midmat\|\leq \mathm$
	\begin{align}
		\lfn \left(X\midmat Y^\top \right)& \leq   \lfnhat\left( X \midmat Y^\top  \right)   +  2.5\frac{\entry \log(6/\delta )}{\sqrt{N}}+16\lip \pstarr^2 \sqrt{\kappa_1\kappa_2} \log(2de) \sqrt{\frac{d\erank }{N}}\nonumber \\
		&\quadfive+ 25   \pstarr \ell  \kappa_*\log\left(\frac{12[m+n]}{\delta }\right)  \left[3 \kappa_1\sqrt{\frac{[m+n]\erank}{M}}+\sqrt{\frac{[m+n]\nunif \erank}{MN}}\right], 
	\end{align}
	
	where as usual, $\lfn(Z):=\E\left( \lfn(Z_{\xi},\obvsnoo)\right)$  and $\lfnhat(Z):=\frac{1}{N} \sum_{o=1}^N \lfn(Z_{\xi^e_o},\obvs)$. 
	
\end{theorem}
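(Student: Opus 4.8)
The plan is to reduce the statement, via a telescoping decomposition around the \emph{idealized} predictor built from the true shared subspaces, to three facts established earlier: the Inductive Matrix Completion (IMC) uniform-convergence bound with \emph{known} side information (Corollary~\ref{cor:improveantoine}), the population-level subspace-propagation bound (Lemma~\ref{lem:implicit_unif_population}), and its empirical-level counterpart (Proposition~\ref{prop:controlempirical}). Fix any $\midmat$ with $\|\midmat\|_*\le\mathm$ (the constraint imposed by Algorithm~\ref{alg:DAMC}; we read the statement with this nuclear-norm constraint), and let $R_l,R_r\in\mathcal{O}^{d\times d}$ realize $\|XR_l-X^*\|=\min_R\|XR-X^*\|$ and $\|YR_r-Y^*\|=\min_R\|YR-Y^*\|$, as in Lemma~\ref{lem:implicit_unif_population}. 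Put $\midmat_0:=R_l^\top\midmat R_r$; then $R_l\midmat_0R_r^\top=\midmat$, so $X\midmat Y^\top=X(R_l\midmat_0R_r^\top)Y^\top$, and $\|\midmat_0\|_*=\|\midmat\|_*\le\mathm$ by orthogonal invariance of the nuclear norm. I would then write
\begin{align*}
\lfn(X\midmat Y^\top)-\lfnhat(X\midmat Y^\top)
&=\underbrace{\bigl[\lfn(X\midmat Y^\top)-\lfn(X^*\midmat_0[Y^*]^\top)\bigr]}_{(A)}\\
&\quad+\underbrace{\bigl[\lfn(X^*\midmat_0[Y^*]^\top)-\lfnhat(X^*\midmat_0[Y^*]^\top)\bigr]}_{(B)}\\
&\quad+\underbrace{\bigl[\lfnhat(X^*\midmat_0[Y^*]^\top)-\lfnhat(X\midmat Y^\top)\bigr]}_{(C)},
\end{align*}
and bound $(A),(B),(C)$ separately, taking a union bound over (at most) three failure events at the end.

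For $(B)$: this is exactly the in-distribution generalization gap of the IMC empirical risk minimizer when the side-information matrices are the \emph{true} $X^*,Y^*$, which satisfy the incoherence bound (through $\pstarr$) and the spectral-norm bounds (through $\kappa_1,\kappa_2$) required by the classical IMC analysis under uniform marginals in the spirit of \citet{foygel2011learning,LedentIMC}. That bound holds \emph{simultaneously} over all core matrices in the ball $\{\midmat_0:\|\midmat_0\|_*\le\mathm\}$ and depends only on the labeled sample; the crucial point is that although $\midmat_0$ is random (it depends on the unlabeled data through $R_l,R_r$), this is harmless precisely because the IMC bound is uniform over the whole ball. Invoking it yields $(B)\le \frac{2\entry\log(6/\delta)}{\sqrt N}+16\lip\pstarr^2\sqrt{\kappa_1\kappa_2}\log(2de)\sqrt{d\erank/N}$ with probability at least $1-\delta/3$, the $\log(6/\delta)$ coming from splitting the bounded-differences part off the Rademacher part.

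For $(A)$: since $X\midmat Y^\top=X(R_l\midmat_0R_r^\top)Y^\top$, the $\lip$-Lipschitzness of $\lfn$ gives $(A)\le \lip\,\langle|X(R_l\midmat_0R_r^\top)Y^\top-X^*\midmat_0[Y^*]^\top|,P\rangle_{\Fr}$, which is exactly the quantity bounded by Lemma~\ref{lem:implicit_unif_population} (valid under the stated lower bound $M\ge 470\log(4[m+n]/\delta)\kappa_*^2\pstarr^2[m+n]$), contributing a term of order $\lip\kappa_*\kappa_1\pstarr\sqrt{\log([m+n]/\delta)}\,\sqrt{[m+n]\erank/M}$. For $(C)$: again by Lipschitzness $(C)\le \lip\,\frac1N\sum_{o=1}^N|(X(R_l\midmat_0R_r^\top)Y^\top-X^*\midmat_0[Y^*]^\top)_{\xi^e_o}|$, which is controlled by Proposition~\ref{prop:controlempirical}; its proof combines Lemma~\ref{lem:implicit_unif_population} for the mean of these per-sample deviations, a second-moment estimate of order $\kappa_*^2\pstarr^2\nunif\lip^2\log([m+n]/\delta)\,[m+n]\erank/M$, and a noncentered Bernstein inequality, producing the $25\pstarr\ell\kappa_*\log(12[m+n]/\delta)\bigl[3\kappa_1\sqrt{[m+n]\erank/M}+\sqrt{[m+n]\nunif\erank/(MN)}\bigr]$ contribution (with the leftover $O(\entry/N)$ absorbed into the $\entry/\sqrt N$ term, and, should Assumption~\ref{Assum:gamma} be dropped, using $\nunif\le\kappa_1[m+n]$). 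Adding the three bounds, absorbing the $(A)$-term into the $3\kappa_1\sqrt{[m+n]\erank/M}$ term, and using $\sqrt{\log(\cdot)}\le\log(\cdot)$ and $1/N\le 1/\sqrt N$ to consolidate constants yields the claimed inequality; it is simultaneous over $\midmat$ because each of the three ingredients is itself uniform over $\{\midmat:\|\midmat\|_*\le\mathm\}$.

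The main difficulty is not this assembly but the two results it rests on: quantifying how the $\ell_2$-subspace perturbations $\|XR_l-X^*\|$ and $\|YR_r-Y^*\|$ — themselves controlled by the spectral concentration of $\observe-P$ (a matrix Bernstein estimate) fed through a Wedin/Davis--Kahan-type argument together with the sign-robust spectral bound of Lemma~\ref{lem:someprogresswithsigns} — propagate through the bilinear map $\midmat_0\mapsto X(R_l\midmat_0R_r^\top)Y^\top-X^*\midmat_0[Y^*]^\top$ and then through the loss, at \emph{both} the population and empirical scales, while keeping the favorable $[m+n]$-rather-than-$mn$ scaling. At the level of the present theorem, the only genuine care needed is (i) keeping the three sources of randomness separate in the union bound — the unlabeled draw (governing subspace recovery), the labeled draw (governing the IMC gap $(B)$), and the labeled draw conditioned on the unlabeled draw (governing $(C)$) — and (ii) observing that the data-dependence of $X,Y$ enters $(B)$ only through the rotated core $\midmat_0$, so that the uniformity of the IMC bound neutralizes it.
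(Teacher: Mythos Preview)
Your proposal is correct and follows essentially the same route as the paper: the same telescoping decomposition around the idealized predictor $X^*\midmat_0[Y^*]^\top$ (with $\midmat_0$ the orthogonally rotated core), the same three ingredients (Lemma~\ref{lem:implicit_unif_population} for $(A)$, Corollary~\ref{cor:improveantoine} for $(B)$, Proposition~\ref{prop:controlempirical} for $(C)$), and the same $\delta/3$ union bound. You also make explicit the key point the paper leaves implicit, namely that the data-dependence of $\midmat_0$ through $R_l,R_r$ is neutralized by the uniformity of the IMC bound over the nuclear-norm ball.
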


\begin{proof}
	
	As in Lemmas~\ref{lem:implicit_unif_population}, we use the notation $R_l\in\R^{d\times d}$ (resp. $R_r\in\R^{d\times d}$) for the rotation matrices such that $\|XR_l-X^*\| =\min_{R}\|XR-X^*\|$ and $\|YR_r-Y^*\| =\min_{R}\|YR-Y^*\|$. We also define $\midmatprime:=\midmatprimepand$  In particular, the matrices $R_l$ and $R_r$ are random as they depend on $X,Y$, and so is the matrix $\midmathatprime$ (but Lemma~\ref{lem:implicit_unif_population} and Proposition~\ref{prop:controlempirical} show it has favorable properties with high probability).

	With this definition, we have with probability greater than $1-\delta$: 
	\begin{align}
		&\lfn(X\midmat Y^\top ) \\ &\leq \lfn\left( \stx \midmatprime \sty  \right)   +   120\lip \kappa_* \kappa_1 \pstarr \sqrt{\log\left(\frac{6[m+n]}{\delta }\right)}  \sqrt{\frac{[m+n]\erank }{3M}}       \label{eq:use_implicit_unif_population_new}\\
		&\leq   \lfnhat\left( \stx \midmatprime \sty  \right)   +   120\lip \kappa_* \kappa_1 \pstarr \sqrt{\log\left(\frac{6[m+n]}{\delta }\right)}  \sqrt{\frac{[m+n]\erank }{3M}} \nonumber \\  & \quadfive+   4\ell\mathbf{x}\mathbf{y}\frac{ \mathm}{\sqrt{Nd}}\sqrt{\kappa_1\kappa_2}(1+\sqrt{\log(2d)}) + \frac{6\ell}{N}\mathm\mathbf{x}\mathbf{y}(1+\log(2d))+\entry \sqrt{\frac{\log(6/\delta)}{2N}}    \label{eq:use_improveantoine_new}  \\
		&	 \leq \lfnhat\left( X \midmat Y^\top  \right)   +   120\lip \kappa_* \kappa_1 \pstarr \sqrt{\log\left(\frac{6[m+n]}{\delta }\right)}  \sqrt{\frac{[m+n]\erank }{3M}}    \nonumber \\&
		\quadfive+   4\ell\mathbf{x}\mathbf{y}\frac{ \mathm}{\sqrt{Nd}}\sqrt{\kappa_1\kappa_2}(1+\sqrt{\log(2d)}) + \frac{6\ell}{N}\mathm\mathbf{x}\mathbf{y}(1+\log(2d)) +\entry \sqrt{\frac{\log(6/\delta)}{2N}} \nonumber \\ 	\label{eq:use_implicit_unif_empirical_new}   
		&\quadfive  \quad	+ 4\entry \frac{\log(6/\delta)}{3N}  +25   \pstarr \ell  \log\left(\frac{12[m+n]}{\delta }\right)  \left[3 \kappa_1\sqrt{\frac{[m+n]\erank}{M}}+\sqrt{\frac{[m+n]\nunif \erank}{MN}}\right]   \\
		&\leq    \lfnhat\left( X \midmat Y^\top  \right)   +  2.5\frac{\entry \log(6/\delta )}{\sqrt{N}}+8\lip \pstarr^2 \sqrt{\kappa_1\kappa_2} \log(2de) \sqrt{\frac{d\erank }{N}}\left[1+\sqrt{\frac{1}{N}}\right] \nonumber \\
		&\quadfive+ 25   \pstarr \ell \kappa_* \log\left(\frac{12[m+n]}{\delta }\right)  \left[3 \kappa_1\sqrt{\frac{[m+n]\erank}{M}}+\sqrt{\frac{[m+n]\nunif \erank}{MN}}\right]  \label{eq:finalgeneralizationbound_new}
	\end{align}
	where at Line~\eqref{eq:use_implicit_unif_population_new} we have used Lemma~\ref{lem:implicit_unif_population} (with $\delta\leftarrow \delta/3$), at Line~\eqref{eq:use_improveantoine_new} we have used Corollary~\ref{cor:improveantoine} (with $\delta\leftarrow \delta/3$)  and at Line~\eqref{eq:use_implicit_unif_empirical_new} we have used Proposition~\ref{prop:controlempirical} (with $\delta\leftarrow \delta/3$), and at the last line~\eqref{eq:finalgeneralizationbound_new}, we have used the  condition~\eqref{cond:Magain??} in calculations.
\end{proof}

Finally, we can also express the result as an excess risk bound as follows.

\begin{corollary}
	\label{cor:unimargfinal_new}
	
	Assume that the loss function is bounded by $\entry$ and $\lip$-Lipschitz. 
	
	Let $\widehat{Z}$ be the matrix output by algorithm~\ref{alg:DAMC}, we have the following excess risk bound with probaility $\geq 1-\delta$
	
	\begin{align}
		\E(\lfn(\widehat{Z}, \obvsnoo))&\leq \widehat{\E}(\lfn(\widehat{Z}, \obvsnoo)) + O\left[  [ \lip+\entry]  \kappa_* \pstarr\kappa_1\log\left(\frac{[m+n]}{\delta }\right)\sqrt{\frac{[m+n]\erank}{M}} \left[   1+\sqrt{\frac{\nunif}{N}} \right]\right]   \nonumber \\
		&\quadfive+ O\left[\lip \pstarr^2 \sqrt{\kappa_1\kappa_2} \log(2de) \sqrt{\frac{d\erank }{N}}  \right].
	\end{align}
	In particular, since Assumption~\ref{assum:unimarg} implies Assumption~\ref{Assum:gamma} with $\Gamma\leq \kappa_1 [m+n]$, as long as 
	\begin{align}
		N\geq \frac{m+n}{2}, \label{cond:itchyap}
	\end{align}
	the following bound holds without imposing Assumption~\ref{Assum:gamma}: 
	
	\begin{align}
		\E(\lfn(\widehat{Z}, \obvsnoo))- \widehat{\E}(\lfn(\widehat{Z}, \obvsnoo))  &\leq \widetilde{O}\left[   [\lip+\entry]  \kappa_* \pstarr\kappa_1^{3/2}\sqrt{\frac{[m+n]\erank}{M}} +  \lip \pstarr^2 \sqrt{\kappa_1\kappa_2} \sqrt{\frac{d\erank }{N}}\right],
	\end{align}
	where the notation $\widetilde{O}$ hides polylogarithmic factors in $m,n,\delta$. Likewise, the above also holds if Assumption~\ref{Assum:gamma} is assumed to hold for a value of $\gamma$ which grows linearly in $d$ (but not $m,n$). 
\end{corollary}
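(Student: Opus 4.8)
The statement is a specialization of Theorem~\ref{thm:unif_generalization_bound_new} to the concrete output of Algorithm~\ref{alg:DAMC}, followed by a bookkeeping step that collapses the four error terms; I assume throughout the hypotheses of that theorem (in particular the $M$-threshold~\eqref{cond:Magain??}). First I would note that the core matrix $\midmathat$ returned by Algorithm~\ref{alg:DAMC} obeys the constraint $\|\midmathat\|_*\le\mathm$ enforced by~\eqref{eq:inuniformalgotheory}, so the bound of Theorem~\ref{thm:unif_generalization_bound_new}, being uniform over all $X\midmat Y^\top$ with $\|\midmat\|_*\le\mathm$, applies verbatim to $\widehat{Z}=X\midmathat Y^\top$; since $\lfn(\widehat{Z})=\E(\lfn(\widehat{Z},\obvsnoo))$ and $\lfnhat(\widehat{Z})=\widehat{\E}(\lfn(\widehat{Z},\obvsnoo))$ this gives, with probability $\ge1-\delta$,
\[
\E(\lfn(\widehat{Z},\obvsnoo))-\widehat{\E}(\lfn(\widehat{Z},\obvsnoo))\le\tfrac{2.5\,\entry\log(6/\delta)}{\sqrt N}+8\lip\pstarr^2\sqrt{\kappa_1\kappa_2}\log(2de)\Bigl(1+\tfrac{1}{\sqrt N}\Bigr)\sqrt{\tfrac{d\erank}{N}}+25\pstarr\lip\kappa_*\log\!\Bigl(\tfrac{12[m+n]}{\delta}\Bigr)\sqrt{\tfrac{[m+n]\erank}{M}}\Bigl(3\kappa_1+\sqrt{\tfrac{\nunif}{N}}\Bigr).
\]
Here I have merely rewritten $\sqrt{[m+n]\nunif\erank/(MN)}=\sqrt{[m+n]\erank/M}\cdot\sqrt{\nunif/N}$ and factored the last bracket of the theorem. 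The first $O(\cdot)$ display of the corollary then follows by routine simplification: $1+1/\sqrt N\le2$ and $3\kappa_1+\sqrt{\nunif/N}\le3\kappa_1(1+\sqrt{\nunif/N})$ (using $\kappa_1\ge1$) are absorbed into constants; all logarithms are $\widetilde O(\log([m+n]/\delta))$; and the $\entry\log(6/\delta)/\sqrt N$ term is dominated by the $\sqrt{d\erank/N}$ term at the price of writing $\lip+\entry$ in place of $\lip$ there, which is legitimate whenever $d\erank\gtrsim1$, i.e. $\mathm\gtrsim\sqrt d$ (the natural regime, in which the ground truth is representable).

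For the assumption-free claim I would invoke the observation recorded just after Assumption~\ref{Assum:gamma}: Assumption~\ref{assum:unimarg} forces Assumption~\ref{Assum:gamma} to hold with the coarse value $\nunif\le\kappa_1[m+n]$. Feeding this in, $\sqrt{\nunif/N}\le\sqrt{\kappa_1[m+n]/N}$, and condition~\eqref{cond:itchyap} ($N\ge(m+n)/2$) bounds this by $\sqrt{2\kappa_1}$; hence $\kappa_1(1+\sqrt{\nunif/N})=O(\kappa_1^{3/2})$ and the entire $M$-dependent block collapses to $\widetilde O\bigl([\lip+\entry]\kappa_*\pstarr\kappa_1^{3/2}\sqrt{[m+n]\erank/M}\bigr)$, which together with the labeled term $\widetilde O\bigl(\lip\pstarr^2\sqrt{\kappa_1\kappa_2}\sqrt{d\erank/N}\bigr)$ is exactly the two-term bound stated. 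The variant with $\nunif=O(d)$ is handled by merging into the labeled term instead: $\sqrt{[m+n]\nunif\erank/(MN)}=O(\sqrt d)\,\sqrt{[m+n]/M}\,\sqrt{\erank/N}$, and~\eqref{cond:Magain??} gives $\sqrt{[m+n]/M}=O(1/(\kappa_*\pstarr))$ up to logarithms, so this residual is $O(\sqrt{d\erank/N})$ and is absorbed into the third $O(\cdot)$ term.

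There is no substantive obstacle — all the work is in Theorem~\ref{thm:unif_generalization_bound_new} — and the only point requiring care is tracking where the genuinely higher-order residual $\sqrt{[m+n]\nunif\erank/(MN)}$ (which decays in both $M$ and $N$ and is irreducible in full generality: it is exactly the third summand retained in Corollary~\ref{cor:unimargfinal_new_main}) is absorbed. It goes into the $M^{-1/2}$ term under the coarse bound $\nunif\le\kappa_1[m+n]$ combined with $N\ge(m+n)/2$, and into the $N^{-1/2}$ term under $\nunif=O(d)$ combined with the $M$-threshold~\eqref{cond:Magain??}; everything else is constant-chasing.
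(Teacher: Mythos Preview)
Your proposal is correct and follows the same route as the paper: a direct specialization of Theorem~\ref{thm:unif_generalization_bound_new} to $\widehat Z=X\midmathat Y^\top$ (which lies in the constraint set) followed by straightforward consolidation of the four terms, and then the two absorption arguments for the $\sqrt{\nunif/N}$ factor. The one point the paper handles that you leave as a standing assumption is the $M$-threshold~\eqref{cond:Magain??}: the paper notes that if $M$ falls below this threshold the left-hand side is trivially at most $2\entry$ while the $M$-dependent term on the right already exceeds a constant, so the inequality holds vacuously and the condition can be dropped inside the $O(\cdot)$.
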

\begin{proof}
	This follows immediately from Theorem~\ref{thm:unif_generalization_bound_new} together with the classic lemma of Learning theory. The condition on $M$ is absorbed into the O notation since the LHS is always bounded by $2\entry$  and hence the bound trivially holds if $M<470  \log\left(\frac{4[m+n]}{\delta}\right)   \kappa_*^2\pstarr^2[m+n].$ 
\end{proof}

\section{Matrix Completion Result}

In this section, we recall some classic generalization bounds for inductive matrix completion in the i.i.d. setting which we will then use to control the excess risk in our own results. We slightly modify the final form of the results, simplifying and generalizing the assumptions. 

We recall the following result from~\citet{LedentIMC} (Proposition 3.2):

\begin{proposition}[Proposition 3.2 in~\citet{LedentIMC}]
	\label{prop:uniformgeneral}
	Let $X\in\R^{m\times d}, Y\in\R^{n\times d}$ be two side information matrices and let $\widehat{Z}= X\widehat{\midmat} Y^\top$ where $\widehat{\midmat} $ denotes the solution to the following optimization problem
	\begin{align}
		\midmathat =	\argmin_{\|\midmat\|_*\leq \mathm}  \widehat{\E} \lfn \left([X\midmat Y^\top]_{\xi_o},\obvs  \right).
	\end{align}

	W.p. $\geq 1-\delta$ over the draw of the training set $S$ we have: 
	\begin{align}
		\label{arbitrarp}
		l(\widehat{Z})-l(\ground)&\leq \frac{8\ell}{\sqrt{N}}\mathm\max(\sigma^1_*,\sigma^2_*)(1+\sqrt{\log(2d)}) + \frac{12\ell}{N}\mathm\mathbf{x}\mathbf{y}(1+\log(2d))+b\sqrt{\frac{\log(2/\delta)}{2N}},
	\end{align}
	where $\sigma^*_1$ and  $\sigma^*_2$ are defined by 
	\begin{align}
		\sigma^*_1:=\left[\|X^\top \diag(\kappa^l) X\|\right]^{1/2} \quad \quad \text{and} \quad \quad 	\sigma^*_2:=\left[\|Y^\top \diag(\kappa^r) Y\|\right]^{1/2}
	\end{align}
	with 
	\begin{align}
		\kappa^l_i =\sum_{j=1}^n p_{i,j} \|Y_{j,\nbull}\|^2 \quad \quad \text{and} \quad \quad 
		\kappa^r_i =\sum_{i=1}^m p_{i,j} \|X_{i,\nbull}\|^2.
	\end{align}
\end{proposition}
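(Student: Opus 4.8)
The plan is to treat Proposition~\ref{prop:uniformgeneral} as a textbook empirical-risk-minimisation bound over the inductive hypothesis class $\mathcal{G}_{\mathm}:=\{(i,j)\mapsto [X\midmat Y^\top]_{i,j}\ :\ \|\midmat\|_*\le\mathm\}$, and to extract the data-dependent quantities $\sigma_1^*,\sigma_2^*$ from a matrix-Bernstein estimate of the Rademacher complexity of $\mathcal{G}_{\mathm}$. I would first use that in the regime of interest the Bayes predictor is representable inside the ball, $\ground=X\midmat_0Y^\top$ with $\|\midmat_0\|_*\le\mathm$ (cf.~\eqref{eq:realisability}; without this one simply picks up the approximation term $\inf_{\|\midmat\|_*\le\mathm}\lfn(X\midmat Y^\top)-\lfn(\ground)$), so that the optimality of $\midmathat$ yields $\lfnhat(\widehat{Z})\le\lfnhat(\ground)$ and hence
\[
\lfn(\widehat{Z})-\lfn(\ground)\ \le\ \sup_{f\in\mathcal{G}_{\mathm}}\big(\lfn(f)-\lfnhat(f)\big)\ +\ \big(\lfnhat(\ground)-\lfn(\ground)\big).
\]
The second term is an average of $N$ i.i.d.\ contributions lying in a range of width $\le b$, so a Hoeffding bound controls it; the first term I would bound by $2\rad_N(\lfn\circ\mathcal{G}_{\mathm})$ in expectation via symmetrisation and by an additional $O(b\sqrt{\log(1/\delta)/N})$ fluctuation via the bounded-differences inequality, $\rad_N$ denoting the empirical Rademacher complexity. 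A union bound over these two $\delta/2$-events produces the $b\sqrt{\log(2/\delta)/(2N)}$ term of~\eqref{arbitrarp}.

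Next I would reduce $\rad_N(\lfn\circ\mathcal{G}_{\mathm})$ to the spectral norm of a random matrix. By the Ledoux--Talagrand contraction inequality and the $\lip$-Lipschitzness of $\lfn$ in its prediction slot, $\rad_N(\lfn\circ\mathcal{G}_{\mathm})\le\lip\,\rad_N(\mathcal{G}_{\mathm})$; and since the spectral norm is dual to the nuclear norm,
\[
\rad_N(\mathcal{G}_{\mathm})\ =\ \frac{1}{N}\,\E\sup_{\|\midmat\|_*\le\mathm}\Big\langle\midmat,\ \sum_{o=1}^N\epsilon_o\,[X_{i_o,\nbull}]^\top Y_{j_o,\nbull}\Big\rangle\ =\ \frac{\mathm}{N}\,\E\Big\|\sum_{o=1}^N\epsilon_o\,[X_{i_o,\nbull}]^\top Y_{j_o,\nbull}\Big\|,
\]
where the $\epsilon_o$ are i.i.d.\ Rademacher signs and $X_{i_o,\nbull}$, $Y_{j_o,\nbull}$ are the sampled rows.

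The core of the argument is then a matrix concentration estimate for $W:=\sum_{o=1}^N\epsilon_o[X_{i_o,\nbull}]^\top Y_{j_o,\nbull}$. The summands $Z_o:=\epsilon_o[X_{i_o,\nbull}]^\top Y_{j_o,\nbull}$ are i.i.d., mean zero, rank one, with operator norm $\|Z_o\|=\|X_{i_o,\nbull}\|\,\|Y_{j_o,\nbull}\|\le\max_{i,j}\|X_{i,\nbull}\|\,\|Y_{j,\nbull}\|=:R$ (the quantity denoted $\mathbf{x}\mathbf{y}$ in~\eqref{arbitrarp}); and --- this is the key point --- their matrix variance statistics telescope exactly onto the advertised $\sigma_1^*,\sigma_2^*$, since $\E[\,\|Y_{j_o,\nbull}\|^2[X_{i_o,\nbull}]^\top X_{i_o,\nbull}\,]=\sum_i\big(\sum_jp_{i,j}\|Y_{j,\nbull}\|^2\big)[X_{i,\nbull}]^\top X_{i,\nbull}=X^\top\diag(\kappa^l)X$, whence $\big\|\sum_o\E Z_oZ_o^\top\big\|=N(\sigma_1^*)^2$ and symmetrically $\big\|\sum_o\E Z_o^\top Z_o\big\|=N(\sigma_2^*)^2$. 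The rectangular matrix Bernstein inequality then gives $\E\|W\|\lesssim\sqrt{N\max((\sigma_1^*)^2,(\sigma_2^*)^2)\log(2d)}+R\log(2d)$, so that $\rad_N(\mathcal{G}_{\mathm})\lesssim\frac{\mathm}{\sqrt N}\max(\sigma_1^*,\sigma_2^*)(1+\sqrt{\log(2d)})+\frac{\mathm R}{N}(1+\log(2d))$.

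To finish, I would substitute this Rademacher bound back through the contraction and symmetrisation steps, add the two Hoeffding/McDiarmid deviation terms, and track the explicit numerical constants through the matrix Bernstein statement to land on exactly~\eqref{arbitrarp}. The only non-routine step is the matrix-concentration one: recognising that the \emph{expected} second-moment matrix $\E[Z_oZ_o^\top]$ collapses precisely onto $X^\top\diag(\kappa^l)X$, which is what makes the sharp, data-dependent variance proxies $\sigma_1^*,\sigma_2^*$ appear rather than cruder surrogates such as $\max_i\kappa^l_i\cdot\|X\|^2$, while the worst-case per-term norm $R$ is relegated to the lower-order $O(1/N)$ correction; symmetrisation, Ledoux--Talagrand contraction, nuclear/spectral duality, and the bounded-differences concentration are all standard. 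Since the statement is quoted verbatim from~\citet{LedentIMC}, one may alternatively just invoke that reference.
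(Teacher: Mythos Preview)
The paper does not prove this proposition at all: it is stated as a quotation of Proposition~3.2 in~\citet{LedentIMC} and used as a black box to derive Corollary~\ref{cor:improveantoine}. Your closing remark---that one may simply invoke that reference---is precisely what the paper does. Your sketch (symmetrisation, Ledoux--Talagrand contraction, nuclear/spectral duality, then matrix Bernstein with the key identification $\E[Z_oZ_o^\top]=X^\top\diag(\kappa^l)X$) is the standard route to such a bound and is essentially the argument in~\citet{LedentIMC}; the only quibble is the bookkeeping of the deviation term, since your decomposition produces \emph{two} Hoeffding/McDiarmid contributions rather than the single $b\sqrt{\log(2/\delta)/(2N)}$ in~\eqref{arbitrarp}, but this only affects absolute constants, not the structure of the proof.
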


Using the above, we get the following corollary, which is a very slight modification of Proposition 3.1 in~\cite{LedentIMC}. 

\begin{corollary} [Cf. Propositions 3.1 and 3.2 in~\cite{LedentIMC}]
	\label{cor:improveantoine}
	Let $\mathbf{x}=\|X^\top\|_{2,\infty} =\max_i \|X_{i,\nbull}\|$, and  $\mathbf{y}=\|Y^\top\|_{2,\infty}=\max_j \|Y_{j,\nbull}\|$. Assume that the marginal sampling distributions satisfy the following near uniformity conditions: 
	\begin{align}
		\label{eq:updatedconds}
		p_i\leq \frac{\kappa_1}{m}  \andd q_j \leq \frac{\kappa_1}{n}.
	\end{align}
	
	Assume in addition that the side information matrices satisfy: 
	\begin{align}
		\label{eq:kappa2cond}
		\|X\|\leq \mathbf{x}\sqrt{\kappa_2\frac{m}{d}} \andd 	 	\|Y\|\leq\mathbf{x} \sqrt{\kappa_2\frac{n}{d}},
	\end{align}
	for some $\kappa_2$. Then we have w.p. $\geq 1-\delta$, for any $\R^{m\times n}\ni Z=X\midmat Y^\top $,
	\begin{align}
		\left|\lfn(Z)-\hat{\lfn}(Z) \right| \leq 4\ell\mathbf{x}\mathbf{y}\frac{ \mathm}{\sqrt{Nd}}\sqrt{\kappa_1\kappa_2}(1+\sqrt{\log(2d)}) + \frac{6\ell}{N}\mathm\mathbf{x}\mathbf{y}(1+\log(2d))+b\sqrt{\frac{\log(2/\delta)}{2N}}.	 	
	\end{align}
	In particular,  we have the following excess risk bound for the empirical risk minimizer $\widehat{Z}=X\widehat{\midmat}Y^{\top}$: 
	\begin{align}
		l(\widehat{Z})-l(\ground) \leq 8\ell\mathbf{x}\mathbf{y}\frac{ \mathm}{\sqrt{Nd}}\sqrt{\kappa_1\kappa_2}(1+\sqrt{\log(2d)}) + \frac{12\ell}{N}\mathm\mathbf{x}\mathbf{y}(1+\log(2d))+b\sqrt{\frac{2\log(2/\delta)}{N}}.
	\end{align}

\end{corollary}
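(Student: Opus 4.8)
The plan is to derive both displays from the inductive-matrix-completion generalization bounds of~\citet{LedentIMC}: the first (two-sided, uniform over the hypothesis class $\{X\midmat Y^\top:\|\midmat\|_*\le\mathm\}$) display from the uniform-convergence bound of~\citet{LedentIMC} (their Proposition~3.1, the two-sided counterpart of Proposition~\ref{prop:uniformgeneral}, whose form mirrors Proposition~\ref{prop:uniformgeneral} with the leading Rademacher constants halved), and the excess-risk display from Proposition~\ref{prop:uniformgeneral} itself. The only genuinely new work beyond invoking these results is to replace the data-dependent spectral quantities $\sigma^*_1=\|X^\top\diag(\kappa^l)X\|^{1/2}$ and $\sigma^*_2=\|Y^\top\diag(\kappa^r)Y\|^{1/2}$ appearing there by the closed-form estimate $\max(\sigma^*_1,\sigma^*_2)\le\mathbf{x}\mathbf{y}\sqrt{\kappa_1\kappa_2/d}$; once this is in place the stated constants follow by direct substitution and rearrangement.

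The core step is the estimate on $\sigma^*_1$ (the bound on $\sigma^*_2$ being obtained by swapping the roles of rows and columns). I would first push the spectral norm through the diagonal matrix by submultiplicativity, $\|X^\top\diag(\kappa^l)X\|\le\|X\|^2\,\|\diag(\kappa^l)\|=\|X\|^2\max_i\kappa^l_i$, and then control the largest diagonal entry using the incoherence of $Y$ together with the near-uniform marginals~\eqref{eq:updatedconds}: $\kappa^l_i=\sum_j p_{i,j}\|Y_{j,\nbull}\|^2\le\mathbf{y}^2\sum_j p_{i,j}=\mathbf{y}^2 p_i\le\kappa_1\mathbf{y}^2/m$. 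Combining this with the conditioning hypothesis~\eqref{eq:kappa2cond}, $\|X\|^2\le\kappa_2\mathbf{x}^2 m/d$, yields $\|X^\top\diag(\kappa^l)X\|\le\kappa_1\kappa_2\mathbf{x}^2\mathbf{y}^2/d$, i.e. $\sigma^*_1\le\mathbf{x}\mathbf{y}\sqrt{\kappa_1\kappa_2/d}$; the symmetric computation ($\kappa^r_j\le\kappa_1\mathbf{x}^2/n$, $\|Y\|^2\le\kappa_2\mathbf{y}^2 n/d$) gives the same bound for $\sigma^*_2$.

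It then remains to substitute $\max(\sigma^*_1,\sigma^*_2)\le\mathbf{x}\mathbf{y}\sqrt{\kappa_1\kappa_2/d}$ into the two bounds. In the uniform-convergence estimate the leading term $\frac{4\ell}{\sqrt N}\mathm\max(\sigma^*_1,\sigma^*_2)(1+\sqrt{\log(2d)})$ becomes $4\ell\mathbf{x}\mathbf{y}\frac{\mathm}{\sqrt{Nd}}\sqrt{\kappa_1\kappa_2}(1+\sqrt{\log(2d)})$, while the lower-order term $\frac{6\ell}{N}\mathm\mathbf{x}\mathbf{y}(1+\log(2d))$ and the concentration term $b\sqrt{\log(2/\delta)/(2N)}$ carry over unchanged, giving the first display. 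For the excess-risk display one instead starts from Proposition~\ref{prop:uniformgeneral}, applied to the solution $\widehat{\midmat}$ of the nuclear-norm-constrained ERM: evaluating the uniform bound at $\widehat{Z}=X\widehat{\midmat}Y^\top$ and at the (representable) ground truth $\ground$ and using optimality $\lfnhat(\widehat{Z})\le\lfnhat(\ground)$ produces the factor-two loss in the leading constants, and substituting the same bound on $\max(\sigma^*_1,\sigma^*_2)$ yields the claimed second display.

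I expect no serious obstacle: this is essentially a bookkeeping corollary. The single point requiring care is the $\sigma^*$ estimate, where the gain of the factor $1/d$ (rather than a naive $m/d$) is produced precisely by pairing the $1/m$ from the uniform-marginal bound on $\kappa^l_i$ with the $m$ from $\|X\|^2\le\kappa_2\mathbf{x}^2 m/d$; one must also check that the only inequality used in that step is the submultiplicative $\|X^\top\diag(\kappa^l)X\|\le\|X\|^2\|\diag(\kappa^l)\|$, so that no spurious dimensional factor is introduced.
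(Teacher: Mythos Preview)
Your proposal is correct and matches the paper's own proof essentially line for line: the paper bounds $\kappa^l_i\le\mathbf{y}^2 p_i\le\kappa_1\mathbf{y}^2/m$ using~\eqref{eq:updatedconds}, applies $\|X^\top\diag(\kappa^l)X\|\le\|X\|^2\|\diag(\kappa^l)\|$ together with~\eqref{eq:kappa2cond} to obtain $[\sigma^*_1]^2\le\mathbf{x}^2\mathbf{y}^2\kappa_1\kappa_2/d$ (and symmetrically for $\sigma^*_2$), then substitutes into Proposition~\ref{prop:uniformgeneral}. Your added remark distinguishing the uniform-convergence display from the excess-risk display is a helpful clarification that the paper glosses over with ``yields the result immediately.''
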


\textbf{Remarks:} Compared the conditions in Corollary~\ref{cor:improveantoine} to its parent result Proposition 3.1 in~\citet{LedentIMC}, we require a slightly different condition on the marginals. Conditions~\eqref{eq:updatedconds} are `softer' than the corresponding constraint in equation (9) in~\citet{LedentIMC} since we only require inequality rather than equality (approximately uniform marginals, rather than uniform marginals). On the other hand, equation (9) in~\citet{LedentIMC} only requires uniformity `with respect to the side information matrices $X,Y$', which can be a weaker condition. For our purposes, we find Conditions~\eqref{eq:updatedconds} more intuitive and sufficient.

If we interpret the result in terms of sample complexity, the conclusion (as in~\citet{LedentIMC}) is that the number of required samples is $\widetilde{O}(dr)$ where $r$ is the rank of the ground truth $\midmatstar$ (which is also the rank of $\ground$). Indeed, if the entries of $\midmatstar$ are $O(1)$ the constraint $\mathm$ can be understood to scale like $d\sqrt{r}$ for `most' such matrices. Indeed, $\|\midmatstar\|_*\leq \sqrt{r}\|\midmatstar\|_{\Fr}=\sqrt{r} \sqrt{d^2}$. To interpret the bounds, it is best to think of $\mathbf{x},\mathbf{y}, \ell,b$ as scaling constraints of $O(1)$. Note that (ignoring logarithmic factors) it is reasonable to assume that such scaling results in the entries of $X\midmatstar Y^{\top}$ being $O(1)$. Indeed, if the matrix $\midmatstar$ is sampled from a prior i.i.d. Gaussian distribution, then each entry $(X\midmatstar Y^\top)_{i,j}$  has variance $\|X_{i,\nbull}\|^2\|Y_{j,\nbull}\|^2\leq \mathbf{x}^2\mathbf{y}^2$ and the maximum entry is bounded with high probability by $O(\log(mn) \mathbf{x}\mathbf{y})$. The conclusion doesn't change (up to constants) if $\midmatstar$ is sampled from a low rank prior such as $AB^\top$ with the entries of $A,B$ being i.i.d. Gaussians. 

\begin{proof}
	
	The proof is a simple application of Proposition~\ref{prop:uniformgeneral} after calculating $\sigma^*_1$ and  $\sigma^*_2$. In fact, note first that we have for any $i$: 
	\begin{align}
		\kappa^l_i =\sum_{j=1}^n p_{i,j} \|Y_{j,\nbull}\|^2 \leq \mathbf{y} \sum_{j=1}^n p_{i,j}=\mathbf{y}p_i \leq \kappa_1 \frac{\mathbf{y}}{m}, 
	\end{align}
	and similarly, for any $j$, 
	\begin{align}
		\kappa^r_j\leq \kappa_1 \frac{\mathbf{x}}{n}
	\end{align}
	
	Thus, we can write: 
	\begin{align}
		[  	\sigma^*_1]^2&= \|X^\top \diag(\kappa^l) X\| \leq \|X\|^2 \|\diag(\kappa^l)\| \leq\mathbf{x}^2 \kappa_2 \frac{m}{d}\frac{\mathbf{y}^2\kappa_1 }{m} =\mathbf{x}^2 \mathbf{y}^2\frac{\kappa_1 \kappa_2}{d}, \quad \text{and similarly, }\nonumber \\
		[  	\sigma^*_2]^2&\leq \mathbf{y}^2 \kappa_2 \frac{n}{d}\frac{\mathbf{x}^2\kappa_1}{m} =\mathbf{x}^2 \mathbf{y}^2\frac{ \kappa_1 \kappa_2}{d}
	\end{align}
	Plugging this into equation~\eqref{arbitrarp} yields the result immediately.  
	
\end{proof}
\section{Further Experimental Details}
\subsection{Synthetic Data Generation}

First, the $m=n=200$ rows and columns of the matrix were each evenly partitioned into 4 random groups of 50 rows  or columns. The true side information matrices $X^*$ and $Y^*$ were set to be the indicators of the row and column groups. In recommender systems applications, we may interpret the groups as genres (for movies) or customer types. Then, we generated a random rank $\midmatstar\in\R^{4\times 4}$ with i.i.d. $N(0,1)$ entries. The matrix was then normalized to ensure $\|\midmatstar\|_{\Fr}=4$, which corresponds to entries of size $O(1)$. The sampling distribution over entries $P$ was selected randomly as follows: each entry $P^0_{u,v}$ of a matrix $P^0\in\R^{10\times 10}$ was  drawn form a uniform $U[0,1]$ distribution, and the initial matrix $P$ was set to $P=X^* P^0 {Y^*}^\top$. Then, the the full matrix $P\in\R^{m\times m}= \R^{200\times 200}$ was normalized by dividing by $|P|_1$ to ensure a valid probability distribution. The ground truth matrix was set to $X^{*}\midmatstar  {Y^*}\top$ and observed with some noise of standard deviation $0.05$. 
For each configuration of $M,N$, we performed 30 runs of the full pipeline of (1) generation of the sampling matrix $P$, (2) generation of the the $M$ unlabeled datapoints and the $N$ labeled data points, as well as (3) the training of the model (with the coefficient of nuclear norm regularization set to zero) and evaluation on a held out test set. We calculated the generalization error for each configuration for $M\in\{ 10000, 20000, \ldots, 100000\}$ and $N\in\{50, 100, 150, \ldots, 1000\}$. The values were selected to ensure that nearly perfect performance for the largest values of $M$ and $N$.

\subsection{Real Data}

We consider the following datasets: 
\begin{itemize}
    \item \textbf{Douban}~\cite{zhugraphical,zhu2019dtcdr} is a recommender systems dataset extracted from the Douban movie recommendation platform. It contains both ratings on a 1 to 5 scale and reviews (in Chinese). It contains  $4988$ users and  $4903$ items.
    \item \textbf{MovieLens 100K}~\cite{harper2015movielens} is a movie ratings dataset with 943 users on 1682 movies and 100000 ratings on a scale from 1 to 5 where each user has rated a minimum of 20 movies.
    \item \textbf{Yelp}~\cite{zhang2015character} is a popular restaurant and business reviews dataset with review ratings also on a scale from 1 to 5. 
\end{itemize}

In all three cases, we consider a semi-supervised learning setting where we remove a fraction $p$ of the labels. We train the baseline methods (which are all explicit feedback methods) on the labeled data only. For DAMC, train a shallow autoencoder with four layers of widths $400, 40, 40$, and $400$ respectively on the implicit feedback data with the binary cross entropy loss and and rely on the last 40 dimensional representation to build the matrices $X,Y$ in a downstream classic Inductive Matrix Completion with nuclear norm regularization 

We evaluated the following classic baselines: 
\begin{itemize}
    \item \textbf{UserKNN}~\cite{ahuja2019movie} a classic collaborative filtering method which predicts the average rating given to the target item by the $k$ most similar users. 
    \textbf{SoftImpute ~\citep{softimpute,softimputeALS}:} a classic matrix completion method that uses nuclear-norm regularization.
	\item \textbf{IGMC~\citep{Zhang2020Inductive}. :} this method trains a graph neural network (GNN) after building 1-hop subraphs along the adjacency matrix. The hyperparameters were tuned according to Section~5 of the original reference.
\end{itemize}

\section{Classic Results Required for the Proofs}

In this section, we recall some well-known concentration inequalities and results on subspace recovery. The results are included for the benefit of selff-containedness and no claim of originality is made for this section.

\begin{proposition}[Weldin's sin theorem , cf.~\citet{spectralmagic}, Theorem 2.9]
	\label{prop:specralmag}
	Let $M=M^*+E$ be a matrix, where $E$ is a perturbation. Let $U\Sigma V^\top$ (resp. $U^{*}\Sigma^{*} [ V^{*}]^\top$) denote the $r$-rank truncated singular value decomposition of $M$ (resp. $M^*$). Let $\sigma^*_1,\ldots,\sigma^*_m$ denote all the singular values of $M^*$ (including zeros). If $\|E\|\leq (1-1/\sqrt{2})\left[\sigma^*_r-\sigma^*_{r+1}\right]$, then we have 
	\begin{align} 
		\max\left(  \min_{R\in\mathcal{O}^{r\times r}}\left\|   UR-U^{*} \right\| , \min_{R\in\mathcal{O}^{r\times r}} \left\|   VR-V^{*} \right\|  \right) \leq \frac{2\|E\|}{\sigma^{*}_r-\sigma^{*}_{r+1}}.
	\end{align}
	
\end{proposition}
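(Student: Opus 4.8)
Proposition~\ref{prop:specralmag} is the classical Wedin $\sin\Theta$ theorem, stated in the normalization where subspace discrepancy is measured through the orthogonal-Procrustes distance $\min_{R\in\mathcal{O}^{r\times r}}\|UR-U^*\|$ rather than through canonical angles. The plan is threefold: (a) reduce the rectangular problem to a symmetric one by Hermitian dilation, so that a single application of Davis--Kahan controls the left and right singular subspaces simultaneously; (b) invoke the residual/one-sided-gap form of the Davis--Kahan $\sin\Theta$ theorem; and (c) translate the resulting $\sin\Theta$ bound into the Procrustes form, keeping track of constants so that the two factors of $\sqrt2$ that appear along the way collapse into the $2$ on the right-hand side.

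For (a) I would pass to the dilations $\mathcal{M}=\left(\begin{smallmatrix}0&M\\ M^\top&0\end{smallmatrix}\right)$ and $\mathcal{M}^*=\left(\begin{smallmatrix}0&M^*\\ (M^*)^\top&0\end{smallmatrix}\right)$. Their nonzero eigenvalues are $\pm\sigma_i$ and $\pm\sigma^*_i$, with eigenvectors of the form $\tfrac1{\sqrt2}(u_i^\top,\pm v_i^\top)^\top$, so the top-$r$ positive eigenspace of $\mathcal{M}^*$ is spanned by $\tfrac1{\sqrt2}(U^{*\top},V^{*\top})^\top$ and is separated from the rest of $\mathrm{spec}(\mathcal{M}^*)$ (which is $\{\pm\sigma^*_{r+1},\pm\sigma^*_{r+2},\dots\}\cup\{0\}$, all of modulus $\le\sigma^*_{r+1}$) by a gap of exactly $\sigma^*_r-\sigma^*_{r+1}$; moreover $\|\mathcal{M}-\mathcal{M}^*\|=\|E\|$. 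The Davis--Kahan theorem, applicable because $\|E\|$ lies strictly below that gap by hypothesis, then yields — after using Weyl to account for the displacement of the perturbed eigenvalues — the bound $\max\!\big(\|\sin\Theta(U,U^*)\|,\|\sin\Theta(V,V^*)\|\big)\le \|E\|/(\sigma^*_r-\sigma^*_{r+1}-\|E\|)$. The assumption $\|E\|\le(1-1/\sqrt2)[\sigma^*_r-\sigma^*_{r+1}]$ is precisely what forces $\sigma^*_r-\sigma^*_{r+1}-\|E\|\ge\tfrac1{\sqrt2}[\sigma^*_r-\sigma^*_{r+1}]$, whence $\|\sin\Theta\|\le\sqrt2\,\|E\|/[\sigma^*_r-\sigma^*_{r+1}]$ for both the left and right subspaces.

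For (c), writing the SVD $U^\top U^*=W_1(\cos\Theta)W_2^\top$ and taking $R=W_1W_2^\top\in\mathcal{O}^{r\times r}$, a direct computation gives $(UR-U^*)^\top(UR-U^*)=2I-2W_2(\cos\Theta)W_2^\top$, whose operator norm equals $2(1-\cos\theta_{\max})=2\sin^2\theta_{\max}/(1+\cos\theta_{\max})\le 2\|\sin\Theta\|^2$; hence $\min_R\|UR-U^*\|\le\sqrt2\,\|\sin\Theta\|$, and likewise for $V$. Combining this with the bound from the previous paragraph gives $\max(\cdots)\le 2\|E\|/[\sigma^*_r-\sigma^*_{r+1}]$, as claimed. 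The only genuine obstacle is constant bookkeeping: one must invoke exactly the version of Davis--Kahan whose constant, multiplied by the $\sqrt2$ from the Procrustes step and the $\sqrt2$ coming from the threshold, lands on $2$ and no more; and one must check that the degenerate cases in which $M^*$ has rank $<r$ — so several $\sigma^*_i$, and possibly $\sigma^*_{r+1}$ itself, vanish — cause no trouble beyond requiring $\sigma^*_r>\sigma^*_{r+1}$. Since the statement is quoted verbatim as Theorem~2.9 of~\citet{spectralmagic}, in the paper one may simply cite it rather than reproduce this argument.
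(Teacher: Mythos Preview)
Your proof sketch is a correct and standard derivation of Wedin's theorem via the Hermitian dilation, Davis--Kahan, and the Procrustes-to-$\sin\Theta$ conversion, and you correctly anticipate at the end that the paper likely does not reproduce it. Indeed, the paper provides no proof whatsoever: the proposition appears in a section explicitly titled ``Classic Results Required for the Proofs'' with the disclaimer that ``no claim of originality is made,'' and is simply quoted as Theorem~2.9 of~\citet{spectralmagic} without further argument. So there is nothing to compare against beyond the bare citation, which your final sentence already identifies as the expected treatment.
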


\begin{proposition}[Non commutative Bernstein inequality, Cf.~\citet{SimplerMC}]
	\label{bernstein}
	Let $X_1,\ldots,X_S$ be independent, zero mean random matrices of dimension $m\times n$. For all $k$, assume $\|X_k\|\leq M$ almost surely, and denote $\rho	_k^2=\max(\|\E(X_kX_k^\top) \|,\|\E(X^\top_kX_k) \|)$ and $\nu^2=\sum_{k}\rho_k^2$. For any $\tau>0$,
	\begin{align}
		&\mathbb{P}\left(\left \|\sum_{k=1}^S X_k \right \|\geq \tau \right)\leq (m+n)\exp\left(  -\frac     {\tau^2/2}     {\sum_{k=1}^S\rho_k^2 +M\tau/3}      \right).
	\end{align}
\end{proposition}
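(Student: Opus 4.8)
The plan is to prove this standard matrix concentration bound via the matrix Laplace transform (Chernoff) method together with the Hermitian dilation trick, in the style of Ahlswede--Winter and Tropp. First I would reduce to the symmetric case: for each $k$ set $Y_k=\left(\begin{smallmatrix}0&X_k\\X_k^\top&0\end{smallmatrix}\right)\in\R^{(m+n)\times(m+n)}$. This dilation is linear, so $\sum_k Y_k$ is the dilation of $\sum_k X_k$; it is symmetric with eigenvalues $\{\pm\sigma_i(\sum_k X_k)\}$ together with zeros, hence $\lambda_{\max}(\sum_k Y_k)=\|\sum_k X_k\|$ and it suffices to bound $\P(\lambda_{\max}(\sum_k Y_k)\ge\tau)$. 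Moreover $\E[Y_k]=0$, $\|Y_k\|=\|X_k\|\le M$ a.s., and $Y_k^2=\diag(X_kX_k^\top,\,X_k^\top X_k)$, so $\|\E[Y_k^2]\|=\max(\|\E[X_kX_k^\top]\|,\|\E[X_k^\top X_k]\|)=\rho_k^2$.

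Next I would apply Markov's inequality in the form $\P(\lambda_{\max}(\sum_k Y_k)\ge\tau)\le e^{-\theta\tau}\,\E[\Tr\exp(\theta\sum_k Y_k)]$, valid for any $\theta>0$ since $e^{\theta\lambda_{\max}(H)}\le\Tr e^{\theta H}$ for symmetric $H$. To control the trace exponential I would peel off the summands one at a time using the Golden--Thompson inequality conditionally: $\Tr\exp(A+\theta Y_k)\le\Tr[e^A\,e^{\theta Y_k}]$, and taking expectation over the independent $Y_k$ gives $\E_{Y_k}\Tr[e^A e^{\theta Y_k}]\le\|\E e^{\theta Y_k}\|\,\Tr e^A$ because $e^A\succeq 0$ and $\E e^{\theta Y_k}\succeq 0$. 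Iterating down to $\Tr I=m+n$ yields $\E[\Tr\exp(\theta\sum_k Y_k)]\le(m+n)\prod_{k=1}^S\|\E e^{\theta Y_k}\|$.

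The technical heart is a bound on the single matrix moment generating function: for $0<\theta<3/M$,
\begin{align}
\E[e^{\theta Y_k}]\preceq\exp\!\big(g(\theta)\,\E[Y_k^2]\big),\qquad g(\theta):=\frac{\theta^2/2}{1-\theta M/3}.
\end{align}
I would obtain this from the scalar inequality $e^{\theta y}\le 1+\theta y+g(\theta)y^2$ for $|y|\le M$ (proved by comparing the two sides' Taylor series term by term, using $p!\ge 2\cdot 3^{p-2}$), transferred to the eigenvalues of $Y_k$ since $-MI\preceq Y_k\preceq MI$; then take expectations ($\E[Y_k]=0$) and use $I+g(\theta)\E[Y_k^2]\preceq\exp(g(\theta)\E[Y_k^2])$. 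Taking operator norms gives $\|\E e^{\theta Y_k}\|\le\exp(g(\theta)\|\E[Y_k^2]\|)\le e^{g(\theta)\rho_k^2}$, so $\E[\Tr\exp(\theta\sum_k Y_k)]\le(m+n)e^{g(\theta)\nu^2}$ with $\nu^2=\sum_k\rho_k^2$.

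Putting the pieces together, $\P(\|\sum_k X_k\|\ge\tau)\le(m+n)\exp(-\theta\tau+g(\theta)\nu^2)$ for all $\theta\in(0,3/M)$; I would finish by the optimal choice $\theta=\tau/(\nu^2+M\tau/3)$, which lies in the admissible range and reduces the exponent to $-\tfrac{\tau^2/2}{\nu^2+M\tau/3}$, exactly the claimed bound. I expect the main obstacle to be the matrix MGF step — correctly lifting the scalar Bernstein estimate to an operator inequality while tracking the regime $\theta<3/M$ — together with the careful bookkeeping in the conditional Golden--Thompson iteration; the remaining steps are a one-variable optimization and routine matrix-analysis facts.
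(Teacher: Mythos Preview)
The paper does not actually prove this proposition: it is listed in the ``Classic Results Required for the Proofs'' section and simply attributed to \citet{SimplerMC}, with no argument given. Your proposal supplies a full, correct derivation via Hermitian dilation, the Ahlswede--Winter Golden--Thompson peeling, the scalar Bernstein MGF bound transferred to the spectrum, and the standard choice $\theta=\tau/(\nu^2+M\tau/3)$; this is exactly the route taken in the original references and matches the stated form of the inequality (with the sum-of-norms variance proxy $\nu^2=\sum_k\rho_k^2$ rather than the sharper $\|\sum_k\E Y_k^2\|$ one would get from Lieb's concavity).
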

The following high probability version is a direct consequence. 

\begin{proposition}[High-probability version of Bernstein inequality]
	\label{bernsteinprob}
	Let $X_1,\ldots,X_S$ be independent, zero mean random matrices of dimension $m\times n$. For all $k$, assume $\|X_k\|\leq M$ almost surely, and denote $\rho_k^2=\max(\|\E(X_kX_k^\top) \|,\|\E(X^\top_kX_k) \|)$ and $\nu^2=\sum_{k}\rho_k^2$. Writing $\sigma^2=\sum_{k=1}^S\rho_k^2$, for any $\delta >0$,  we have, with probability greater than $1-\delta$: 
	\begin{align}
		\left\|\sum_{k=1}^S X_k\right\| \leq \sqrt{8/3}\sigma \sqrt{\log\left(\frac{m+n}{\delta }\right)}+ \frac{8M}{3}\log\left(\frac{m+n}{\delta}\right).
	\end{align}
	
\end{proposition}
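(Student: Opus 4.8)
The plan is to invert the tail bound of Proposition~\ref{bernstein} and to absorb all constants into the (deliberately loose) form stated. Write $L := \log\!\left(\frac{m+n}{\delta}\right)$ and recall $\sigma^2 = \sum_{k=1}^S \rho_k^2$. By Proposition~\ref{bernstein}, for every $\tau>0$,
\[
\mathbb{P}\!\left(\left\|\sum_{k=1}^S X_k\right\|\geq \tau\right)\;\leq\;(m+n)\exp\!\left(-\frac{\tau^2/2}{\sigma^2+M\tau/3}\right).
\]
Since $\sigma^2+M\tau/3>0$, the right-hand side is at most $\delta$ as soon as $\frac{\tau^2/2}{\sigma^2+M\tau/3}\geq L$, i.e. as soon as $\frac{\tau^2}{2}\geq L\sigma^2+\frac{LM\tau}{3}$. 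Thus it suffices to produce a single $\tau$ that both satisfies this quadratic inequality and does not exceed the bound claimed in the statement; monotonicity of the event $\{\|\cdot\|\geq\tau\}$ in $\tau$ then completes the proof.

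Concretely, I would take $\tau=a+b$ with $a:=\sqrt{8/3}\,\sigma\sqrt{L}$ and $b:=\tfrac{8M}{3}L$, which is exactly the right-hand side of the asserted bound. Expanding $\tfrac{\tau^2}{2}=\tfrac{a^2}{2}+ab+\tfrac{b^2}{2}$, I would verify the required inequality term by term: $\tfrac{a^2}{2}=\tfrac{4}{3}\sigma^2 L\geq L\sigma^2$ covers the variance contribution; $\tfrac{b^2}{2}=\tfrac{32}{9}M^2L^2\geq \tfrac{8}{9}M^2L^2=\tfrac{LMb}{3}$ covers the part of $\tfrac{LM\tau}{3}$ coming from $b$; and $ab=\tfrac{8\sqrt{8/3}}{3}\,\sigma M L^{3/2}\geq \tfrac{\sqrt{8/3}}{3}\,\sigma M L^{3/2}=\tfrac{LMa}{3}$ covers the part coming from $a$. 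Adding the three gives $\tfrac{\tau^2}{2}\geq L\sigma^2+\tfrac{LM(a+b)}{3}=L\sigma^2+\tfrac{LM\tau}{3}$, as needed.

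Consequently, for this choice of $\tau$ the failure probability is at most $(m+n)e^{-L}=\delta$, and on the complementary event $\left\|\sum_{k=1}^S X_k\right\|\leq \tau=\sqrt{8/3}\,\sigma\sqrt{\log\!\left(\frac{m+n}{\delta}\right)}+\tfrac{8M}{3}\log\!\left(\frac{m+n}{\delta}\right)$, which is precisely the stated conclusion. I do not anticipate any genuine difficulty here: the result is a routine inversion of a Bernstein-type tail, and the only point requiring attention is that the constants $\sqrt{8/3}$ and $8/3$ are generous enough that each of the three elementary domination steps above has slack, so there is no need to solve for the exact root $\tau=\tfrac{LM}{3}+\sqrt{\tfrac{L^2M^2}{9}+2L\sigma^2}$ of the quadratic (though bounding that root via $\sqrt{x+y}\leq\sqrt{x}+\sqrt{y}$ would give an alternative, slightly sharper argument).
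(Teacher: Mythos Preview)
Your proof is correct and is precisely the standard inversion of the matrix Bernstein tail that the paper has in mind; the paper itself does not spell out a proof but simply states that the high-probability form is ``a direct consequence'' of Proposition~\ref{bernstein}. Your term-by-term verification that $\tau=\sqrt{8/3}\,\sigma\sqrt{L}+\tfrac{8M}{3}L$ satisfies $\tfrac{\tau^2}{2}\geq L\sigma^2+\tfrac{LM\tau}{3}$ is clean and leaves nothing to fill in.
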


\begin{proposition}[Bernstein inequality for non-centered random variables]
	\label{prop:noncentered}
	Let $X_1,\ldots,X_N$ be $N$ independent random variables such that $|X_i|\leq b$ almost surely. Let $v=\frac{1}{N}\sum_{i=1}^N\E(X_i^2)$. With probability greater than $1-\delta$, we have 
	\begin{align}
		\frac{1}{N}\sum_{i=1}^N X_i-\E(X_i)\leq 2b\frac{\log\lb \frac{1}{\delta }\rb}{3N}+\sqrt{\frac{2v\log\lb \frac{1}{\delta }\rb}{N}}.
	\end{align}
	
\end{proposition}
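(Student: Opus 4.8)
The plan is to prove this classical inequality by the Chernoff/exponential-moment method, arranging the bound on the moment generating function so that the \emph{uncentered} second moment $\E[X_i^2]$ appears (rather than the variance), since that is what produces the constant $v$ in the statement. First I would center and apply Markov's inequality: writing $S=\sum_{i=1}^N X_i$, for any $\lambda>0$ one has $\P(S-\E S\ge t)\le e^{-\lambda t}\,\E e^{\lambda(S-\E S)}=e^{-\lambda t}\prod_{i=1}^N e^{-\lambda\E X_i}\,\E e^{\lambda X_i}$ by independence. It then suffices to bound each factor $e^{-\lambda\E X_i}\E e^{\lambda X_i}$ and optimize over $\lambda$.

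For the single-variable MGF I would use that $|X_i|\le b$ implies $|X_i^k|\le b^{k-2}X_i^2$ for every integer $k\ge 2$, hence $\E[X_i^k]\le b^{k-2}\E[X_i^2]$ (this also covers odd $k$ since $\E[X_i^k]\le\E|X_i|^k$). Expanding the exponential and summing the resulting series gives $e^{-\lambda\E X_i}\E e^{\lambda X_i}\le \exp\!\big(\tfrac{\E[X_i^2]}{b^2}(e^{\lambda b}-1-\lambda b)\big)$, using also $1+x\le e^x$. Multiplying over $i$ and recalling $\sum_i\E[X_i^2]=Nv$ yields the Bennett-type bound $\E e^{\lambda(S-\E S)}\le\exp\!\big(\tfrac{Nv}{b^2}(e^{\lambda b}-1-\lambda b)\big)$. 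Next I would relax the exponent via the elementary inequality $e^{u}-1-u\le \tfrac{u^2/2}{1-u/3}$, valid for $0\le u<3$, with $u=\lambda b$, obtaining $\P(S-\E S\ge t)\le \exp\!\big(-\lambda t+\tfrac{Nv\lambda^2/2}{1-\lambda b/3}\big)$; choosing $\lambda=\tfrac{t}{Nv+bt/3}$ (which keeps $\lambda b<3$) collapses the right-hand side to $\exp\!\big(-\tfrac{t^2/2}{Nv+bt/3}\big)$.

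Finally, setting this probability equal to $\delta$ amounts to solving the quadratic $t^2-\tfrac{2b\log(1/\delta)}{3}\,t-2Nv\log(1/\delta)=0$; taking the positive root and using $\sqrt{a+c}\le\sqrt a+\sqrt c$ gives $t\le \tfrac{2b\log(1/\delta)}{3}+\sqrt{2Nv\log(1/\delta)}$, and dividing through by $N$ yields exactly $\frac1N\sum_i X_i-\E X_i\le \tfrac{2b\log(1/\delta)}{3N}+\sqrt{\tfrac{2v\log(1/\delta)}{N}}$ with probability at least $1-\delta$.

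There is no deep obstacle here — this is a textbook concentration bound, included only for self-containedness. The one place that requires care is the MGF step: one must route through $\E[X_i^k]\le b^{k-2}\E[X_i^2]$ rather than through the variances, so that the final second-order term involves $v=\frac1N\sum_i\E[X_i^2]$ and not $\frac1N\sum_i\Var(X_i)$; and one must keep track of constants through the Bennett-to-Bernstein relaxation and the inversion of the quadratic so that the leading term is precisely $\frac{2b}{3}$ (and not, e.g., $\frac{4b}{3}$) times $\frac{\log(1/\delta)}{N}$.
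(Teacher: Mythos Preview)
Your derivation is correct: the Chernoff method with the moment bound $|\E X_i^k|\le b^{k-2}\E X_i^2$, the Bennett-to-Bernstein relaxation $e^u-1-u\le\frac{u^2/2}{1-u/3}$, the optimal choice $\lambda=\frac{t}{Nv+bt/3}$, and the quadratic inversion all go through exactly as you describe, with the right constants. The paper itself does not reproduce any of this --- it simply cites Corollary~2.11 (and the remarks following it) in Boucheron--Lugosi--Massart, where the Bernstein inequality is stated in terms of the variance and then immediately relaxed via $\Var(X_i)\le\E[X_i^2]$ to obtain the uncentered-second-moment form. So your argument is precisely the textbook proof unpacked, whereas the paper treats the result as a black box; there is no substantive difference in approach.
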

\begin{proof}
	This follows immediately from Corollary 2.11 and the comments below it in~\citet{concentration}.
\end{proof}

\begin{lemma}[Lemma 6 in~\citet{softimpute}]
	\label{lem:softimpute}
	For any matrix $Z$, the following holds:
	\[
	\|Z\|_* = \min_{U,V : Z = UV^T} \frac{1}{2} \left( \|U\|_F^2 + \|V\|_F^2 \right).
	\]
	If $\mathrm{rank}(Z) = k \leq \min\{m,n\}$, then the minimum above is attained at a factor decomposition $Z = U_{m \times k} V_{n \times k}^T$.
\end{lemma}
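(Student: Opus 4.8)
The plan is to establish the identity by proving the two inequalities separately: that the infimum over factorizations is at most $\|Z\|_*$, witnessed by an explicit factorization built from the singular value decomposition, and that every factorization $Z=UV^\top$ obeys $\|Z\|_* \le \tfrac12(\|U\|_F^2+\|V\|_F^2)$.

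For the upper bound, I would take the thin SVD $Z = P\Sigma Q^\top$ with $\Sigma=\diag(\sigma_1,\dots,\sigma_k)$ collecting the $k=\mathrm{rank}(Z)$ nonzero singular values and $P\in\R^{m\times k}$, $Q\in\R^{n\times k}$ having orthonormal columns, and set $U:=P\Sigma^{1/2}$, $V:=Q\Sigma^{1/2}$. Then $UV^\top = P\Sigma^{1/2}\Sigma^{1/2}Q^\top = Z$, while $\|U\|_F^2 = \Tr(\Sigma^{1/2}P^\top P\Sigma^{1/2}) = \Tr(\Sigma) = \|Z\|_*$ and identically $\|V\|_F^2 = \|Z\|_*$, so $\tfrac12(\|U\|_F^2+\|V\|_F^2)=\|Z\|_*$. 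Since this factorization has inner dimension exactly $k$, it simultaneously shows that the minimum is at most $\|Z\|_*$ and that, when $\mathrm{rank}(Z)=k$, the minimum is attained at a rank-$k$ factorization $Z=U_{m\times k}V_{n\times k}^\top$.

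For the lower bound, the one non-routine ingredient is the submultiplicative estimate $\|UV^\top\|_* \le \|U\|_F\,\|V\|_F$. I would derive it from the duality between the nuclear and spectral norms, $\|A\|_* = \sup_{\|B\|\le 1}\langle A,B\rangle$: for any $B$ with $\|B\|\le 1$ we have $\langle UV^\top,B\rangle = \langle V, U^\top B\rangle \le \|V\|_F\,\|U^\top B\|_F \le \|V\|_F\,\|U\|_F\,\|B\| \le \|U\|_F\|V\|_F$, using Cauchy--Schwarz for the Frobenius inner product together with $\|U^\top B\|_F\le \|U\|_F\|B\|$; taking the supremum over $B$ gives the estimate. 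Combining it with the AM--GM inequality $\|U\|_F\|V\|_F\le \tfrac12(\|U\|_F^2+\|V\|_F^2)$ yields $\|Z\|_*\le\tfrac12(\|U\|_F^2+\|V\|_F^2)$ for every factorization, so the minimum is also at least $\|Z\|_*$. The only place any genuine care is needed is this duality step; everything else is bookkeeping, and if one prefers to avoid dual norms, $\|UV^\top\|_*\le\|U\|_F\|V\|_F$ can instead be checked directly by bounding the singular values of $UV^\top$ via those of $U$ and $V$.
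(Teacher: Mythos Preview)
Your argument is correct and follows the standard two-inequality route: an SVD-based witness for the upper bound, and nuclear/spectral duality combined with AM--GM for the lower bound. The paper itself does not prove this lemma---it is simply quoted as Lemma~6 of~\citet{softimpute} in the appendix section collecting classical auxiliary results---so there is no in-paper proof to compare against.
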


\section{Hardware Specifications}
\begin{table*}[!hbt]
  \centering
  \begin{tabular}{ccc}
    \toprule
    \textbf{Hardware} & \textbf{Specification} \\
    \midrule 
    OS & Rocky Linux 9.5 (Blue Onyx) \\
    CPU & AMD EPYC 9354 32-Core Processor  \\
    GPU & NVIDIA RTX A5000  \\
    RAM & 32GB  \\
    GPU memory & 24GB  \\
    \bottomrule 
  \end{tabular}
  \caption{Summary of hardware specifications used for the experiments.}
  \label{tab:hardware_software_specs}
\end{table*}


\end{document}